\documentclass{article}


\usepackage[final]{nips_2018}
\usepackage{nips_2018}
\usepackage{hyperref}
\usepackage{microtype}
\usepackage{graphicx}
\usepackage{subfigure}
\usepackage{booktabs}

\usepackage{times}
\usepackage{amsmath,amssymb,amsthm,amsfonts,latexsym,bbm,xspace,graphicx,float,mathtools,dsfont, xcolor,wrapfig}

\usepackage{algorithm}
\usepackage{algorithmic}


\newcommand{\notshow}[1]{{}}

\newtheorem{definition}{Definition}
\newtheorem{theorem}{Theorem}
\newtheorem{lemma}{Lemma}
\newtheorem{fact}{Fact}

\newenvironment{prevproof}[2]{\noindent {\em {Proof of {#1}~\ref{#2}:}}}{$\Box$\vskip \belowdisplayskip}

\newcommand{\poly}{\text{poly}}
\newcommand{\argmax}{\text{argmax}}
\newcommand{\argmin}{\text{argmin}}

\newcommand{\E}{\mathbb{E}}
\newcommand{\cS}{\mathcal{S}}

\newcommand{\ALG}{\textsc{ALG}}
\newcommand{\WALG}{\widehat{\textsc{ALG}}}

\newcommand{\bR}{\mathbb{R}}
\newcommand{\bS}{\mathbb{S}}
\definecolor{MyGray}{rgb}{0.8,0.8,0.8}




\usepackage[utf8]{inputenc} 
\usepackage[T1]{fontenc}    
\usepackage{hyperref}       
\usepackage{url}            
\usepackage{booktabs}       
\usepackage{amsfonts}       
\usepackage{nicefrac}       
\usepackage{microtype}      

\title{Learning safe policies with expert guidance}

%

\author{Jessie Huang \qquad Fa Wu \qquad Doina Precup \qquad Yang Cai
	\\School of Computer Science, McGill University
	\\\texttt{\{jiexi.huang,fa.wu2\}@mcgill.ca, \{dprecup,cai\}@cs.mcgill.ca} }
\notshow{
\author{Jessie Huang\\
\texttt{jiexi.huang@mcgill.ca} \\
\And Fa Wu\\
 \texttt{fa.wu2@mail.mcgill.ca} \\
\AND Doina Precup\\
 \texttt{dprecup@cs.mcgill.ca} \\
\And Yang Cai \\
  \texttt{cai@cs.mcgill.ca} \\
}
  }

\begin{document}

\maketitle

\begin{abstract}
We propose a framework for ensuring safe behavior of a reinforcement learning agent when the reward function may be difficult to specify. In order to do this, we rely on the existence of demonstrations from expert policies, and we provide a theoretical framework for the agent to optimize in the space of rewards consistent with its existing knowledge.  We propose two methods to solve the resulting optimization: an exact ellipsoid-based method and a method in the spirit of the "follow-the-perturbed-leader" algorithm. Our experiments demonstrate the behavior of our algorithm in both discrete and continuous problems. The trained agent safely avoids states with potential negative effects while imitating the behavior of the expert in the other states. 
\end{abstract}
\section{Introduction}

In Reinforcement Learning (RL), agent behavior is driven by an objective function defined through the specification of rewards. Misspecified rewards may lead to {\it negative side effects} \cite{AmodeiOSCSM16}, when the agent acts unpredictably responding to the aspects of the environment that the designer overlooked, and potentially causes harms to the environment or itself. As the environment gets richer and more complex, it becomes more challenging to specify and balance rewards for every one of its aspects. Yet if we want to have some type of safety guarantees in terms of the behavior of an agent learned by RL once it is deployed in the real world, it is crucial to have a learning algorithm that is robust to mis-specifications.

We assume that the agent has some knowledge about the reward function either through past experience or demonstrations from experts. The goal is to choose a robust/safe policy that achieves high reward with respect to any reward function that is consistent with the agent's knowledge~\footnote{{Note that the safety as used here is more in the context of AI safety, and a policy is safe because it is robust to misspecified rewards and the consequent negative side effects. }}.We formulate this as a \emph{maxmin learning} problem where the agent chooses a policy and an adversary chooses a reward function that is consistent with the agent's current knowledge and minimizes the agent's reward. The goal of the agent is to learn a policy that maximizes the worst possible reward.

 We assume that the reward functions are linear in some feature space. Our formulation has two appealing properties: (1) it allows us to combine demonstrations from multiple experts even though they may disagree with each other; and (2) the training environment/MDP in which the experts operate need not be the same as the testing environment/MDP where the agent will be deployed, our results hold as long as the testing and training MDPs share the same feature space. As an application, our algorithm can learn a maxmin robust policy in a new environment that contains a few features that are not present in the training environment. See our gridworld experiment in Section~\ref{sec:exp_gw}. 
 
 Our first result (Theorem~\ref{thm:exact maxmin}) shows that given any algorithm that can find the optimal policy for an MDP in polynomial time, we can solve the maxmin learning problem exactly in polynomial time. Our algorithm is based on a seminal result from combinatorial optimization -- the equivalence between separation and optimization~\cite{GrotschelLS81,KarpP82} -- and the ellipsoid method. To understand the difficulty of our problem, {it is useful to think of maxmin learning as a two-player zero-sum game between the agent and the adversary.} The deterministic policies correspond to the pure strategies of the agent. The consistent reward functions we define in Section~\ref{sec:reward SO} form a convex set and the adversary's pure strategies are the extreme points of this convex set. Unfortunately, both the agent and the adversary may have exponentially many pure strategies, which are hard to describe explicitly. This makes solving the two-player zero-sum game challenging.  Using tools from combinatorial optimization, we manage to construct separation oracles for both the agent's and the adversary's set of policies using the MDP solver as a subroutine. With the separation oracles, we can solve the maxmin learning problem in polynomial time using the ellipsoid method.

 
  Theorem~\ref{thm:exact maxmin} provides a polynomial time algorithm, but as it heavily relies on the ellipsoid method, it is computationally expensive to run in practice. We propose another algorithm (Algorithm~\ref{alg:FPL maxmin}) based on the online learning algorithm -- \emph{followed-the-perturbed-leader (FPL)}, and show that after $T$ iterations the algorithm computes a policy that is at most $O\left(\nicefrac{1}{\sqrt{T}}\right)$ away from the true maxmin policy (Theorem~\ref{thm:FPL maxmin}). Moreover, each iteration of our algorithm is polynomial time. Notice that many other low-regret learning algorithms, such as the multiplicative weights update method (MWU), are not suitable for our problem. The MWU requires explicitly maintaining a weight for every pure strategy 
   and updates them in every iteration, resulting in an exponential time algorithm for our problem. Furthermore, we show that Algorithm~\ref{alg:FPL maxmin} still has similar performance when we only have a fully polynomial time approximation scheme (FPTAS) for solving the MDP. The formal statement and proof are postponed to the supplemental material due to space limit.

\subsection{Related Work}\label{sec:related}
In the sense of using expert demonstrations, our work is related to inverse reinforcement learning (IRL) and apprenticeship learning~\cite{NgR00,AbbeelN04,SyedS08,SyedBS08}. In particular, the apprenticeship learning problem {aims to learn a policy that performs at least as well as the expert's policy on all basis rewards}, and can also be formulated as a maxmin problem~\cite{SyedS08,SyedBS08}. Despite the seemingly similarity, our maxmin learning problem aims to solve a completely different problem than apprenticeship learning. Here is a simple example: {consider in a gridworld, there are two basis rewards, $w_1$ and $w_2$, and there are only two routes/policies -- $top$ and $bottom$. The expert takes the $top$ route, getting $100$ under $w_1$ and $70$ under $w_2$. Alternatively, taking the $bottom$ route gets $90$ under both $w_1$ and $w_2$. Apprenticeship learning will return the { $top$ route}, because taking the alternative route performs worse than the expert under $w_1$ and violates the requirement. What is our solution? Assuming $\epsilon=25$~\footnote{See Section~\ref{sec:reward SO} for the formal definition of consistent rewards. Intuitively, it means that the expert's policy yields a reward that is within $\epsilon$ of the optimal possible reward.}, both $w_1$ and $w_2$ (or any convex combination of them) are consistent with the expert demonstration. If we choose the top route, the worst case performance (under $w_2$ in this case) is 70, while the correct maxmin solution to our problem is to take the {$bottom$ route} so that its worst performance is $90$.}\notshow{assume the expert is consistent with two reward functions $w_1$ and $w_2$ with $\epsilon = 25$~\footnote{See Section~\ref{sec:reward SO} for the formal definition of consistent rewards. Intuitively, it means that the expert's policy yields a reward that is within $\epsilon$ of the optimal possible reward.}. The expert gets $100$ under $w_1$ and $70$ under $w_2$, and the only alternative policy gives $90$ and $90$. Apprenticeship learning will return the expert policy while ours will return the latter solution.} In the worst case (under $w_2$), our maxmin policy has better guarantees and thus is more robust. 
Unlike apprenticeship learning/IRL, we do not want to mimic the experts or infer their rewards, but we want to produce a policy with robustness guarantees by leveraging their data. As a consequence, our results are applicable to settings where the training and testing environments are different (as discussed in the Introduction). Moreover, our formulation allows us to combine multiple expert demonstrations.  

\notshow{\paragraph{Apprenticeship Learning} Similarity: maxmin formulation, expert guidance. Difference: apprenticeship hopes to imitate the expert's behavior. We just want to infer consistent rewards from the expert and then do maxmin planning. Give example. Different application: we can accommodate multiple experts and do not need the training domain to be the same as testing domain. See experiment.~\cite{SyedBS08,SyedS08}}

Inverse reward design~\cite{HadfieldMARD17}  uses a proxy reward and infers the true reward by estimating its posterior. Then it uses risk-averse planning together with samples from the posterior in the testing environment to achieve safe exploration. Our approach achieves a similar goal without assuming any distribution over the rewards and is arguably more robust. We apply a single reward function to the whole MDP while they apply (maybe too pessimistically) per step/trajectory maxmin planning. Furthermore, our algorithm is guaranteed to find the maxmin solution in polynomial time, and can naturally accommodate multiple experts. 

{In repeated IRL~\cite{AminJS17}, the agent acts on the behalf of a human expert in a variety of tasks, and the human expert corrects the agent when the agent's policy is far from the optimum. The goal is to minimize the number of corrections from the expert, and they provide an upper bound on the number of corrections by reducing the problem to the ellipsoid method. Their model requires continuous interaction with an expert while our model only assumes the availability of one or a couple expert policies prior to training. Furthermore, we aim to find a maxmin optimal policy, while their paper focuses on minimizing the number of corrections needed.}

Robust Markov Decision Processes  \cite{NilimEG05,Iyengar05}  have addressed the problem of performing dynamic programming-style optimization environments in which the transition probability matrix is uncertain. Lim, Xu \& Mannor \cite{LimXM13} have extended this idea to reinforcement learning methods.  This body of work also uses min-max optimization, but because the optimization is with respect to worst-case transitions, this line of work results in very pessimistic policies.  Our algorithmic approach and flavor of results are also different.
\cite{MorimotoD05} have addressed a similar adversarial setup, but in which the environment designs a worst-case disturbance to the dynamics of the agent, and have addressed this setup using $H_\infty$ control. 


\paragraph{Paper Organization:} We introduce the notations and  define the maxmin learning problem in Section~\ref{sec:prelim}. 
We provide three different ways to  
define the set of consistent reward functions in Section~\ref{sec:reward SO}, and present the ellipsoid-based exact algorithm and its analysis in Section~\ref{sec:exact alg}. The FPL-based algorithm and its analysis are in Section~\ref{sec:exact FPL}, followed by experimental results in Section~\ref{sec:exp.}.
\section{Preliminary}\label{sec:prelim}

An MDP is a tuple $M = (\mathcal{S}, \mathcal{A}, P_{sa}, \gamma, D, R)$, including a finite set of states, $\mathcal{S}$, a set of actions, $\mathcal{A}$, and transition probabilities, $P_{sa}$.  $\gamma$ is a discount factor, and $D$ is the distribution of initial states.  The reward function $R$ instructs the learning process. We assume that the reward is a linear function of some vector of features $\phi$: $ \mathcal{S} \to  [0, 1] ^ k$  over states. That is $R(s) = w \cdot \phi(s)$ for every state $s\in \cS$, where $w\in\mathbb{R}^k$ is the \emph{reward weights} of the MDP. The true reward weights $w^*$ is assumed to be unknown to the agent. We use $\langle\cdot\rangle$ to denote the bit complexity of an object. In particular, we use $\langle M\rangle$ to denote the bit complexity of $M$, which is the number of bits required to represent the distribution of initial states, transition probabilities, the discount factor $\gamma$, and the rewards at all the states. We use the notation $M\backslash R$ to denote a MDP without the reward function, and $\langle M\backslash R \rangle$ is its bit complexity. We further assume that $\phi(s)$ can be represented using at most $\langle \phi \rangle$ bits for any state $s\in \mathcal{S}$.

An agent selects the action according to a policy $\pi$. The value of a policy under rewards $w$ is $ \E_{s_0 \sim D}[V^\pi(s_0) | M] = w \cdot \E[\sum_{t=0}^{\infty} \gamma^t\phi(s_t) | M,\pi]$. It is expressed as the weights multiplied by the accumulated discounted feature value given a policy, which we define as $\Psi(\pi)=\E[\sum_{t=0}^{\infty} \gamma^t\phi(s_t) | M,\pi]$.

\paragraph{MDP solver}
We assume that there is a RL algorithm $\ALG$ that takes an MDP as input and outputs an optimal policy and its corresponding representation in the feature space. In particular, $\ALG(M)$ outputs $(\pi^*,\mu^*)$ such that $\E_{s_0\sim D}[V^{\pi^*}(s_0) | M]=\max_\pi  \E_{s_0 \sim D}[V^\pi(s_0) | M]$ and $\mu^*=\Psi(\pi^*)$.

\paragraph{Maxmin Learning}
All weights that are consistent with the agent's knowledge form a set $P_R$. We will discuss several formal ways to define this set in Section~\ref{sec:reward SO}. The goal of the agent is to learn a policy that maximizes the reward for any reward function that could be induced by weights in $P_R$ and adversarially chosen. More specifically, the max-min learning problem is $\boldsymbol{\mathop{\max}_{\mu\in P_F} \mathop{\min}_{w\in P_R} w^T\mu}$,
where $P_F$ is the polytope that contains the representations of all policies in the feature space, i.e. $P_F=\{\mu\ | \ \mu =\Psi(\pi) \text{ for some policy $\pi$ } \}$. WLOG, we assume that all weights lie in $[-1,1]^k$.



\notshow{\subsection {Ellipsoid}

The Ellipsoid method can solve an LP in polynomial time. Consider an input LP that has the form of:

\begin {equation}
K = { Ax \leq b : x \geq 0}, 
\end{equation}

where $A \subseteq \bR ^ n$. If $n=1$, we can solve the LP directly. Without loosing generality, we asume $n>1$. The Ellipsoid algorithm maintains an ellipsoid $E$ that completely contains the convex set $K$, if $K$ is nonempty.  We check if the center of the ellipsoid is in $K$. If it is, we are done and we find a point in K. If it is not, we find a hyperplane $a^Tx \leq b$ that is satisfied by all points in $K$ but not the center of the ellipsoid. The Ellipsoid algorithm is as follows:

\begin{algorithm}[ht]
\begin{algorithmic}[1]
\REQUIRE {An input set $K \subseteq \bR^n$.}
\STATE Initialize an ellipsoid $E_o$ that contains $K$, $k = 0$.
\IF {the center $c_k$ of $E_k$ is in $K$}
	\STATE Terminate and report $c_k$
\ELSE \STATE Get a separating hyperplane through $c_k$, i.e., $a^T x = a^T c_k $ such that the set K is completely contained in the half-ellipsoid formed by the intersection of the half space and the current ellipsoid. 
	\STATE Take the smallest volume ellipsoid containing the half-ellipsoid that contains $K$, set $k=k+1$ and repeat the iteration
\ENDIF
\end{algorithmic}
\caption{{\sf Ellipsoid Algorithm}}
\label{alg:Ellipsoid}
\end{algorithm}

The ellipsoids constructed by the algorithm shrink in volume as the algorithm proceeds. In the end, we either find a point in the input set (solve the LP) or determine that the input set is empty. The algorithm is polynomial-time. In order to use the ellipsoid method, to solve LPs in our problem, we need separation oracles as access to the polytopes of interest, which we discuss later.}

\paragraph{Separation Oracles} To perform maxmin learning, we often need to optimize linear functions over convex sets that are intersections of exponentially many halfspaces. Such optimization problem is usually intractable, but if the convex set permits a polynomial time \emph{separation oracle}, then there exists polynomial time algorithms (e.g. ellipsoid method) that optimize linear functions over it.

\begin{definition}(\textbf{Separation Oracle}) Let $P$ be a closed, convex subset of Euclidean space $\mathbb{R}^d$. Then a \emph{Separation Oracle} for $P$ is an algorithm that takes as input a point ${x}\in\mathbb{R}^d$ and outputs ``\textsc{Yes}'' if ${x} \in P$, or a hyperplane $({w},c)$ such that $w\cdot {y} \leq c$ for all ${y} \in P$, but $w\cdot{x} > c$. Note that because $P$ is closed and convex, such a hyperplane always exists whenever ${x} \notin P$. 


\end{definition}

\section{Consistent Reward Polytope}\label{sec:reward SO}
In this section, we discuss several ways to define the consistent reward polytope $P_R$.

\paragraph{Explicit Description} We assume that the agent knows that the weights satisfy a set of explicitly defined linear inequalities of the form $c\cdot w \geq b$. For example, such an inequality can be learned by observing that a particular policy yields a reward that is larger or smaller than a certain threshold.~\footnote{Note that with a polynomial number of trajectories, one can apply standard Chernoff bounds to derive such inequalities that hold with high probability. It is often the case that the probability is so close to $1$ that the inequality can be treated as true always for any practical purposes.}

\paragraph{Implicitly Specified by an Expert Policy} Usually, it may not be easy to obtain many explicit inequalities about the weights. Instead, we may have observed a policy $\pi_E$ used by an expert. We further assume that the expert's policy has a reasonably good performance under the true rewards $w^*$. Namely, $\pi_E$'s expected reward is only $\epsilon$ less than the optimal one. Let the expert's feature vector $\mu_E = \Psi(\pi_E)$. The set $P_R$ therefore contains all $w$ such that $\mu_E \cdot w \geq \mu^T \cdot w - \epsilon, \forall \mu \in P_F$. It is not hard to verify that under this definition $P_R$ is a convex set. Even though explicitly specifying $P_R$ is extremely expensive as there are infinitely many $\mu\in P_F$, we can construct a polynomial time separation oracle $SO_R$ (Algorithm~\ref{alg:SO}). An alternative way to define $P_R$ is to assume that the expert policy can achieve $(1-\epsilon)$ of the optimal reward (assuming the final reward is positive). We can again design a polynomial time separation oracle similar to Algorithm~\ref{alg:SO}. 

		\begin{algorithm}[H]
			\begin{algorithmic}[1]
				\INPUT {$w' \in \mathbb{R}^{k}$}
				\STATE Let $\mu_{w'}:=\mathop{\argmax}_{\mu\in P_F} \mu \cdot w' $. Notice that $\mu_{w'}$ is the feature vector of the optimal policy under reward weights $w'$. Hence, it can be computed by our MDP solver $\ALG$.
				\IF {$\mu_{w'}\cdot w'> \mu_E \cdot w' + \epsilon$} \STATE output ``NO'' , and 
	 				$ \left(\mu_E- \mu_{w'} \right)\cdot w + \epsilon\geq 0$ as the separating hyperplane, 
	 				since for all $w \in P_R, \mu_E\cdot w \geq \mu_{w'} \cdot w -\epsilon$.
				\ELSE \STATE output ``YES''.
				\ENDIF
			\end{algorithmic}
		\caption{{\sf Separation Oracle $SO_R$ for the reward polytope $P_R$}}
		\label{alg:SO}
		\end{algorithm}

\paragraph{Combining Multiple Experts} 
How can we combine demonstrations from experts operating in drastically different environments? Here is our model. For each environment $i$, there is a separate MDP $M_i$, and all the MDPs share the same underlying weights as they are all about completing the same task although in different environments. The $i$-th expert's policy is nearly optimal in $M_i$. More specifically, for expert $i$, her policy $\pi_{E_i}$ is at most $\epsilon_i$ less than the optimal policy in $M_i$. Therefore, each expert $i$ provides a set of constraints that any consistent reward needs to satisfy, and $P_R$ is the set of rewards that satisfy all constraints imposed by the experts. For each expert $i$, we can design a separation oracle $SO^{(i)}_R$ (similar to Algorithm~\ref{alg:SO}) accepting weights that respect the constraints given by expert $i$'s policy. We can easily design a separation oracle for $P_R$ that only accepts weights that will be accepted by all separation oracles $SO^{(i)}_R$. 

From now on, we will not distinguish between different ways to define and access the consistent reward polytope $P_R$, but simply assume that we have a polynomial time separation oracle for it. All the algorithms we design in this paper only require access to this separation oracle. In Section~\ref{sec:exp.}, we will specify how the $P_R$ is defined for each experiment. 

\section{Maxmin Learning using an Exact MDP Solver}\label{sec:alg}
In this section, we show how to design maxmin learning algorithms. Our algorithm only interacts with the MDP through the MDP solver, which can be either model-based or model-free. Our first algorithm solves the maxmin learning problem exactly using the ellipsoid method. Despite the fact that the ellipsoid method has provable worst-case polynomial running time, it is known to be inefficient sometimes in practice. Our second algorithm is an efficient iterative method based on the online learning algorithm -- \emph{follow-the-perturbed-leader (FPL)}. 

\subsection{Ellipsoid-Method-Based Solution}\label{sec:exact alg}
\begin{theorem}\label{thm:exact maxmin}
	Given a polynomial time separation oracle $SO_R$ for the consistent reward polytope $P_R$ and an exact polynomial time MDP solver $\ALG$, we have a polynomial time algorithm such that for any MDP without the reward function $M\backslash R$, the algorithm computes the maxmin policy $\pi^*$ with respect to $M\backslash R$ and $P_R$. \end{theorem}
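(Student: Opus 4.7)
The plan is to reduce the maxmin problem to a linear program that the ellipsoid method can solve, where the two constraint families---$\mu \in P_F$ and $w^\top \mu \ge v$ for all $w \in P_R$---are each handled via a polynomial-time separation oracle. Concretely, I would write the maxmin value as the LP $\max\{v : \mu \in P_F,\; w^\top \mu \ge v \text{ for all } w \in P_R\}$ and show that we can separate over its feasible region in polynomial time. Together with a polynomial bound on the bit complexity of $P_F$ (vertices are feature vectors of deterministic policies, so their encoding is polynomial in $\langle M\setminus R\rangle$ and $\langle \phi\rangle$), the ellipsoid method then returns an exact optimal solution in polynomial time.

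The first technical step is to build a separation oracle $SO_F$ for $P_F$ from the MDP solver $\ALG$. Since $P_F$ is the convex hull of the feature vectors $\Psi(\pi)$ over deterministic policies, $\ALG$ is exactly a linear optimization oracle over $P_F$: given any candidate reward $w$, it returns a vertex $\mu_w = \argmax_{\mu \in P_F} w^\top \mu$ in polynomial time. By the Gr\"otschel--Lov\'asz--Schrijver / Karp--Papadimitriou equivalence between separation and optimization, this yields a polynomial-time separation oracle $SO_F$ for $P_F$; if a queried point $\mu'$ lies outside $P_F$, the oracle produces a hyperplane $w$ with $w^\top \mu' > w^\top \mu$ for every $\mu \in P_F$, i.e.\ $w^\top \mu' > w^\top \mu_w$.

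The second technical step is to separate the constraint $w^\top \mu \ge v$ for all $w \in P_R$. Given a candidate $(\mu', v')$, I would invoke the ellipsoid method a second time, using $SO_R$, to compute $w' = \argmin_{w \in P_R} w^\top \mu'$; the constraint is satisfied iff $w'^\top \mu' \ge v'$, and otherwise $w'$ provides a violated linear inequality to return. Combined with $SO_F$, this gives a polynomial-time separation oracle for the full LP, so an outer application of the ellipsoid method solves the LP exactly and produces an optimal pair $(\mu^*, v^*)$.

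The final step is to recover an actual policy from $\mu^*$. Since $P_F$ lives in $\mathbb{R}^k$, Carath\'eodory's theorem guarantees $\mu^* = \sum_{i=1}^{k+1} \lambda_i \Psi(\pi_i)$ for deterministic policies $\pi_i$; I would extract such a decomposition constructively by iteratively calling $\ALG$ to produce vertices of $P_F$ along directions that peel off extreme rays of $\mu^*$ (a standard consequence of the ellipsoid-based implementation of Carath\'eodory decomposition using an optimization oracle), and then output the mixed policy that plays $\pi_i$ with probability $\lambda_i$, whose feature vector is $\mu^*$ and whose value against any adversarial $w \in P_R$ is at least $v^*$. The main obstacle I expect is the first step---carefully invoking the separation--optimization equivalence in a way that respects the (possibly non-full-dimensional) geometry of $P_F$ and yields polynomial bit-complexity guarantees for the overall ellipsoid run; once $SO_F$ is in hand, the remaining pieces are essentially bookkeeping around nested ellipsoid calls and Carath\'eodory extraction.
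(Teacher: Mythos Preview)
Your proposal is correct and follows essentially the same approach as the paper: formulate the maxmin as the LP in Figure~\ref{fig:maxmin LP}, build $SO_F$ from $\ALG$ via the separation--optimization equivalence (Lemma~\ref{lem:sep-opt}), handle the $P_R$ constraints by an inner ellipsoid call with $SO_R$ (Algorithm~\ref{alg:SO maxmin}), and solve the outer LP by ellipsoid. The only difference is in the final recovery step: the paper does not run a separate Carath\'eodory extraction but instead invokes Lemma~\ref{lem:convhull}, which guarantees that the polynomially many vertices $\mu_w$ produced by $SO_F$'s own queries to $\ALG$ already have $\mu^*$ in their convex hull, so the mixing weights $p_i$ are found by a single small LP over those vertices---slightly cleaner than your generic extraction, though both are valid.
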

	

\begin{wrapfigure}{r}{0.5\textwidth}
\centering
\colorbox{MyGray}{
\begin{minipage}{0.45\textwidth} 
\vspace{-10pt}
\begin{equation*}
\begin{split}
\max\quad \displaystyle &z \\
 \text{subject to}\quad \displaystyle &z \leq \mu \cdot w, ~~~ \forall w \in P_R\\
&\mu \in P_F
\end{split}	
\end{equation*}
\end{minipage}}
\caption{Maxmin Learning LP.}
\label{fig:maxmin LP}
\vspace{-10pt}
\end{wrapfigure}
The plan is to first solve the maxmin learning problem in the feature space then convert it back to the policy space. Solving the maxmin learning problem in the feature space is equivalent to solving the linear program in Figure~\ref{fig:maxmin LP}.

The challenges for solving the LP are that (i) it is not clear how to check whether $\mu$ lies in the polytope $P_F$, and (ii) there are seemingly infinitely many constraints of the type $z\leq \mu\cdot w$ as there are infinitely many $w\in P_R$. Next, we show that given an exact MDP solver $\ALG$, we can design a polynomial time separation oracle for the set of feasible variables $(\mu,z)$ of LP~\ref{fig:maxmin LP}. With this separation oracle, we can apply the ellipsoid method (see Theorem~\ref{thm:ellipsoid} in the supplementary material) to solve LP~\ref{fig:maxmin LP} in polynomial time.

First, we design a separation oracle for polytope $P_F$ by invoking a seminal result from optimization -- the equivalence between separation and optimization.

\begin{lemma}[Separation $\equiv$ Optimization]~\cite{GrotschelLS81,KarpP82}\label{lem:sep-opt}
Consider any convex polytope $P = \{x : Ax \leq b\}\in \mathbb{R}^d$ and the following two problems:
\begin{itemize}
	\item \textbf{Linear Optimization:} given a linear objective $c \in \mathbb{R}^d$, compute $x^* \in \argmax_{x\in P} c\cdot x$
	\item \textbf{Separation:} given a point $y \in\mathbb{R}^d$, decide that $y \in P$, or else find $h \in \mathbb{R}^d$ s.t. $h\cdot x < h\cdot y$, $\forall x \in P$.
\end{itemize}
If $P$ can be described implicitly
using $\langle P\rangle$ bits, then the separation problem is solvable in $\poly(\langle P\rangle, d,\langle y\rangle)$ time for P  if and
only if the linear optimization problem is solvable in $\poly(\langle P\rangle, d,\langle c\rangle)$
time.\end{lemma}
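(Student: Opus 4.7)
The plan is to prove each direction of the equivalence via the ellipsoid method, using polar duality to handle the harder direction. In the following I write $P^\circ = \{y \in \mathbb{R}^d : y^T x \leq 1, \forall x \in P\}$ for the polar of $P$, and I assume throughout that $P$ is described implicitly by $\langle P \rangle$ bits.

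For the direction ``separation $\Rightarrow$ optimization'', I would reduce linear optimization to a sequence of feasibility queries of the form ``is $P \cap \{x : c^T x \geq t\}$ nonempty?'', with $t$ adjusted by binary search. Each feasibility query is answered by the ellipsoid method: start with a ball of radius $R = 2^{\poly(\langle P\rangle, d)}$ that contains $P$, and at each step query whether the current center $c_k$ lies in $P$ using the given separation oracle (the extra constraint $c^T x \geq t$ is checked directly). If $c_k \notin P$, shrink to the minimum-volume ellipsoid containing the half-ellipsoid on the feasible side of the returned hyperplane. After $\poly(\langle P\rangle, d)$ iterations the volume drops below the lower bound $2^{-\poly(\langle P\rangle,d)}$ guaranteed for any nonempty rational polytope, at which point infeasibility can be declared. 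Finally, the approximate optimum produced by binary search is rounded to a vertex of $P$ using the fact that vertices have bit complexity $\poly(\langle P\rangle,d)$.

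For the direction ``optimization $\Rightarrow$ separation'', I would use polar duality in three steps. After a Phase-1 procedure that translates and rescales so that the origin lies in the interior of $P$ (also done via the optimization oracle), the polar $P^\circ$ is a well-defined bounded convex polytope. \textbf{Step 1:} the optimization oracle for $P$ yields a separation oracle for $P^\circ$: to test whether $y \in P^\circ$, compute $x^* \in \argmax_{x\in P} y^T x$; if $y^T x^* \leq 1$, output ``YES''; otherwise $x^*$ itself is a separator, since $z^T x^* \leq 1$ for every $z \in P^\circ$ while $y^T x^* > 1$. \textbf{Step 2:} by the first direction of the theorem, this separation oracle for $P^\circ$ gives an optimization oracle for $P^\circ$. \textbf{Step 3:} applying Step 1 with the roles of $P$ and $P^\circ$ swapped yields a separation oracle for $P^{\circ\circ}$, and by the bipolar theorem $P^{\circ\circ} = P$ (using that $P$ is closed, convex, and contains the origin), giving the desired separation oracle for $P$.

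The hard part will be propagating polynomial bit bounds through this chain. Three issues must be controlled: (i) the ellipsoid method only produces approximate answers, so I would lean on the fact that every vertex of $P$ (and of $P^\circ$) has rational coordinates of bit complexity $\poly(\langle P\rangle, d)$, allowing approximate certificates to be rounded to exact separators or optima; (ii) the Phase-1 translation/rescaling that places the origin in the interior of $P$ must itself be computed with polynomial bit complexity, which again uses the optimization oracle to find an interior point of the Minkowski sum of $P$ with a small ball; (iii) one must argue that $\langle P^\circ \rangle = \poly(\langle P\rangle, d)$ so that the second call to the ellipsoid method is polynomial — this follows because facets of $P$ correspond to vertices of $P^\circ$ and inherit their bit complexity. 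These technicalities are exactly the investment made in the proofs of Grötschel--Lovász--Schrijver and Karp--Papadimitriou, and I would follow their treatment rather than re-deriving it from scratch.
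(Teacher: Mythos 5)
The paper does not actually prove this lemma: it is quoted as a black box from \cite{GrotschelLS81,KarpP82}, with only the informal remark that the separation-to-optimization direction is the straightforward one and the converse is the deep one. Your sketch --- ellipsoid method with binary search and vertex rounding for separation~$\Rightarrow$~optimization, and polar duality (optimization over $P$ gives separation over $P^\circ$, hence optimization over $P^\circ$, hence separation over $P^{\circ\circ}=P$) with a Phase-1 normalization and bit-complexity bookkeeping for the converse --- is precisely the classical argument of those cited sources and is correct at the level of detail given, with the one genuinely delicate point you rightly defer to them being the case of non-full-dimensional $P$, where the origin cannot be placed in the interior and the polar is unbounded.
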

	
	 It is not hard to see that if one can solve the separation problem, one can construct a separation oracle in polynomial time and apply the ellipsoid method to solve the linear optimization problem. The less obvious direction in the result above states that if one can solve the linear optimization problem, one can also use it to construct a separation oracle. The equivalence between these two problems  turns out to have profound implications in combinatorial optimization and has enabled numerous polynomial time algorithms for many problems that are difficult to solve otherwise.
	\begin{algorithm}[H]
		\begin{algorithmic}[1]
			\INPUT $(\mu',z')\in\mathbb{R}^{k+1}$
			\STATE Query $SO_F(\mu')$.
			\IF{$\mu' \notin P_F$} \STATE{output ``\textsc{No}'' and output the same separating hyperplane as outputted by $SO_F(\mu')$.}
			\ELSE\STATE Let $w^* \in \mathop{\text{argmin}}_{w \in P_R} \mu' \cdot w$ and $V=\mu' \cdot w^*$. This requires solving a linear optimization problem over $P_R$ using the ellipsoid method with the separation oracle $SO_R$.

			\IF{$z' \leq V$}\STATE {output ``\textsc{Yes}''}
			\ELSE \STATE output ``\textsc{No}'', and a separating hyperplane $z \leq \mu \cdot w^*$, as $z'> \mu'\cdot w^*$ and all feasible solutions of LP~\ref{fig:maxmin LP} respect this constraint.
			\ENDIF
			\ENDIF
		\end{algorithmic}
		\caption{{\sf Separation Oracle for the feasible $(\mu, z)$ in LP~\ref{fig:maxmin LP}}}
		\label{alg:SO maxmin}
	\end{algorithm}

Our goal is to design a polynomial time separation oracle for the polytope $P_{F}$. The key observation is that the linear optimization problem over polytope $P_{F}$: $\max_{\mu\in P_F} w\cdot \mu$ is exactly the same as solving the MDP with reward function $R(\cdot)=w\cdot \phi(\cdot)$. Therefore, we can use the MDP solver to design a polynomial time separation oracle for $P_F$.

\begin{lemma}\label{lem:SO_f}
	Given access to an MDP solver $\ALG$ that solves any MDP $M$ in time polynomial in $\langle M\rangle$, we can design a separation oracle $SO_F$ for $P_F$ that runs in time polynomial in $\langle M\backslash R \rangle$, $\langle \phi \rangle$, $k$, and the bit complexity of the input~\footnote{Note that  $SO_F$ only depends on the bit complexity of $M\backslash R$, but not the actual model of $M\backslash R$ such as the distributions of the initial states or  the transition probabilities. We only require access to $ALG$ and an upper bound of $\langle M\backslash R \rangle$.}.
\end{lemma}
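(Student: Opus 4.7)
The plan is to invoke the equivalence between separation and optimization (Lemma~\ref{lem:sep-opt}) and reduce the construction of $SO_F$ to a single call of the MDP solver $\ALG$. The key observation, already highlighted in the paragraph preceding the lemma, is that linear optimization over $P_F$ with objective $w$ coincides exactly with solving the MDP obtained by plugging in the linear reward $R(s) = w \cdot \phi(s)$: by definition of $P_F$, $\max_{\mu \in P_F} w \cdot \mu = \max_\pi w \cdot \Psi(\pi) = \max_\pi \E\bigl[\sum_{t=0}^\infty \gamma^t R(s_t) \bigm| M,\pi\bigr]$, and the optimizer is exactly the feature vector $\mu^* = \Psi(\pi^*)$ returned by $\ALG(M)$.

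First, I would formally construct an optimization oracle for $P_F$: on input $w \in \mathbb{R}^k$, build the MDP $M = (M\backslash R, R)$ with $R(s) = w \cdot \phi(s)$ and return the $\mu^*$ produced by $\ALG(M)$. The bit complexity of $M$ is bounded by $\langle M\backslash R\rangle + \poly(k,\langle w\rangle,\langle \phi\rangle)$ since each reward value $w \cdot \phi(s)$ can be written down in $\poly(k,\langle w\rangle,\langle \phi\rangle)$ bits, so by hypothesis $\ALG(M)$ runs in time polynomial in $\langle M\backslash R\rangle$, $\langle \phi\rangle$, $k$, and $\langle w\rangle$. Second, I would note that $P_F$ is a well-defined bounded convex polytope whose vertices are the feature vectors $\Psi(\pi)$ of deterministic policies; because features lie in $[0,1]^k$ and are discounted, $P_F \subseteq [0,\tfrac{1}{1-\gamma}]^k$, and each vertex has bit complexity bounded by a polynomial in $\langle M\backslash R\rangle$ and $\langle \phi\rangle$, so $\langle P_F\rangle$ is polynomial in these parameters (this is what is needed to instantiate Lemma~\ref{lem:sep-opt}).

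Finally, applying the ``Optimization $\Rightarrow$ Separation'' direction of Lemma~\ref{lem:sep-opt} turns the above optimization oracle into a separation oracle $SO_F$ running in time polynomial in $\langle P_F\rangle$, $k$, $\langle \mu'\rangle$, and the running time of the optimization oracle, which altogether is polynomial in $\langle M\backslash R\rangle$, $\langle \phi\rangle$, $k$, and the bit complexity of the input $\mu'$, as claimed.

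The main obstacle is really just the bookkeeping in the last paragraph: verifying that the implicit description of $P_F$ has polynomial bit complexity (needed to invoke Lemma~\ref{lem:sep-opt}) and that the objective vectors $w$ that the ellipsoid routine internal to the Grötschel--Lovász--Schrijver reduction passes into the optimization oracle have bit complexity polynomial in the parameters. Both follow from standard facts about the ellipsoid method (polynomial growth of numerators/denominators across iterations), but they are the only non-trivial technical points. Everything else is the identification of $P_F$-optimization with MDP solving, which is immediate from the definition of $\Psi(\pi)$.
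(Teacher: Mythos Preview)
Your proposal is correct and follows essentially the same route as the paper: identify linear optimization over $P_F$ with a single call to $\ALG$ on the MDP with reward $w\cdot\phi(\cdot)$, bound $\langle P_F\rangle$ polynomially, and invoke Lemma~\ref{lem:sep-opt}. The only cosmetic difference is that the paper factors the bit-complexity bound for $P_F$ into a separate lemma by writing down the explicit occupancy-measure LP description of $P_F$, whereas you argue via boundedness and vertex bit complexity; both suffice.
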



\notshow{		\begin{figure}[h!]
	\colorbox{MyGray}{
	\begin{minipage}{0.46\textwidth} {	
		$WSO(\vec{\pi})=$
	\begin{itemize}	
	\item ``\textbf{Yes}'' if the ellipsoid algorithm with $N$ iterations\footnote{The appropriate choice of $N$ for our use of $WSO$ is provided in Corollary~\ref{cor:N} of Section~\ref{sec:runtime}. The only place that requires an appropriate choice of $N$ is the proof of Lemma~\ref{lem:convhull}.} outputs ``infeasible'' on the following problem:
			 
			  \underline{\textbf{variables:}} $\vec{w}, t$;
			 
			 \underline{\textbf{constraints:}}
				\begin{itemize}
		 		\item $\vec{w} \in [-1,1]^d;$
		 		\item $t - \vec{\pi} \cdot \vec{w} \leq -\delta$;\footnote{The appropriate choice of $\delta$ for our use of $WSO$ is provided in Lemma~\ref{lem:delta} of Section~\ref{sec:runtime}. The only place that requires an appropriate choice of $\delta$ is the proof of Lemma~\ref{lem:convhull}.}
		 		\item $\widehat{WSO}(\vec{w},t) = $
		 		\begin{itemize}
		 		\item ``yes'' if $t \geq \mathcal{A}(\vec{w}) \cdot \vec{w}$;\footnote{Notice that the set $\{(\vec{w},t)|\widehat{WSO}(\vec{w},t) =$ ``Yes''$\}$ is not necessarily convex or even connected.}
		 		\item the violated hyperplane $t' \geq \mathcal{A}(\vec{w})\cdot \vec{w}'$ otherwise.
		 		\end{itemize}
				 		\end{itemize}
	\item If a feasible point $(t^*,\vec{w}^*)$ is found, output the violated hyperplane $\vec{w}^* \cdot \vec{\pi}' \leq t^*$.
	\end{itemize}		}
			\end{minipage}} \caption{A ``weird'' separation oracle.}\label{fig:WSO}
	\end{figure}
}



With $SO_F$, we first design a polynomial time separation oracle for checking the feasible $(z,\mu)$ pairs in LP~\ref{fig:maxmin LP} (Algorithm~\ref{alg:SO maxmin}). With the separation oracle, we can solve LP~\ref{fig:maxmin LP} using the ellipsoid method. The last difficulty is that the optimal solution only gives us the maxmin feature vector instead of the corresponding maxmin policy. We use the following nice property of $SO_F$ to convert the optimal solution in the feature space to the policy space. See Section~\ref{sec:missingproof} in the supplementary material for intuition behind Lemma~\ref{lem:convhull}.

\begin{lemma}~\cite{GrotschelLS81,KarpP82,CaiDW13a}\label{lem:convhull}
 If $SO_F(\mu) = $ ``\textsc{Yes}'', there exists a set, $C$, of weights ${w}\in \mathbb{R}^k$ such that $SO_F$ has queried the MDP solver $\ALG$ on reward function $w\cdot \phi(\cdot)$ for every $w\in C$. Let $(\pi_w,\mu_w)$ be the output of $\ALG$ on weight $w$, then $\mu$ lies in the convex hull of  $\{\mu_w | {w} \in C\}$. 
	\end{lemma}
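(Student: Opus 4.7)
The plan is to unpack the construction of $SO_F$ from $\ALG$ guaranteed by Lemma~\ref{lem:sep-opt}, and then read off the desired convex combination from the ellipsoid method's certificate of infeasibility. Recall that to decide whether a query point $\mu \in P_F$, the standard separation-from-optimization reduction runs the ellipsoid method on the auxiliary feasibility problem ``find $w \in [-1,1]^k$ such that $w \cdot \mu > \max_{\mu' \in P_F} w \cdot \mu'$'', since any such $w$ is a separating hyperplane for $\mu$ from $P_F$. The role of $\ALG$ is to play a separation oracle for this auxiliary problem: given a candidate $w$, $\ALG$ returns $(\pi_w, \mu_w)$ with $\mu_w = \argmax_{\mu' \in P_F} w \cdot \mu'$. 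If $w \cdot \mu > w \cdot \mu_w$, then $w$ itself is a separating hyperplane and $SO_F$ outputs ``No''; otherwise, the inequality $w' \cdot (\mu - \mu_w) \leq 0$ is a cutting plane violated by $w$, and the ellipsoid shrinks accordingly.

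Now suppose $SO_F(\mu) = $ ``Yes''. Let $C = \{w_1, \ldots, w_N\}$ be the collection of weight vectors on which $\ALG$ was queried over the course of this execution, and $\mu_{w_1}, \ldots, \mu_{w_N}$ the returned feature vectors. The ``Yes'' answer means the ellipsoid method certified that no $w$ simultaneously satisfies $w \cdot (\mu - \mu_{w_i}) > 0$ for all $i$, i.e., the system $\{w \cdot (\mu - \mu_{w_i}) > 0\}_{i=1}^{N}$ has no solution. By Gordan's theorem (a special case of Farkas' lemma), this is equivalent to the existence of non-negative multipliers $\lambda_1, \ldots, \lambda_N$, not all zero, with $\sum_{i=1}^{N} \lambda_i (\mu - \mu_{w_i}) = 0$. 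Normalizing by $\Lambda := \sum_i \lambda_i > 0$ gives
\[
\mu \;=\; \sum_{i=1}^{N} \frac{\lambda_i}{\Lambda} \, \mu_{w_i},
\]
which exhibits $\mu$ as a convex combination of $\{\mu_{w} : w \in C\}$, as required.

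The main technical obstacle, and the place where I expect the bulk of the care to go, is the passage from ``the ellipsoid method halts'' to ``the strict system is infeasible''. The ellipsoid method actually terminates once the current ellipsoid's volume drops below a threshold determined by the bit complexity of the problem, and its execution certifies infeasibility only in an appropriate numerical sense. To make the Gordan-style argument rigorous, one needs the standard bit-complexity analysis showing that for a bounded system with rational data, if no solution exists at the tolerance to which the ellipsoid method runs, then no exact solution exists either; equivalently, the cuts collected during the run, together with the initial box $[-1,1]^k$, already have empty common solution set under a strict-inequality reading. This is precisely the technical content established in \cite{GrotschelLS81,KarpP82}, and the adaptation to an optimization oracle that is itself solving an MDP (and therefore polynomially describable in $\langle M\backslash R\rangle$, $\langle\phi\rangle$, and $k$) follows the template used in \cite{CaiDW13a}; I would invoke those results to close the argument rather than redo the bit-complexity bookkeeping in detail.
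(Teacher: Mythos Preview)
Your proposal is correct and matches the paper's approach. The paper itself does not supply a formal proof of Lemma~\ref{lem:convhull}; it only offers the same intuition you spell out---$SO_F$ runs the ellipsoid method searching for a separating weight $w$, collects the queried $\mu_w$'s as cuts, and a ``\textsc{Yes}'' answer certifies that no separating $w$ exists, whence $\mu$ lies in the convex hull of the collected $\mu_w$'s---and then defers to \cite{GrotschelLS81,KarpP82,CaiDW13a} for the details. Your write-up goes one step further by naming the Gordan/Farkas step explicitly and flagging the bit-complexity subtlety in passing from ``ellipsoid terminated'' to ``the strict system is genuinely infeasible'', which is exactly the content the cited references supply; this is a faithful elaboration rather than a different route.
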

	


\notshow{\begin{enumerate}
\item Check if $\mu' \in P_F$ by querying the separation oracle of $P_F$, if not, directly output ``NO'' and output the same separating hyperplane as output by $P_F$.
\item Let $w^* = \mathop{\text{argmin}}_{w \in P_R} \mu' \cdot w$ and $V=\mu' \cdot w^*$. 
\item If $z' \leq V$, output ``YES'';
\item Else output ``NO'', and a separating hyperplane $z \leq \mu \cdot w^*$.
\end{enumerate}}

\begin{prevproof}{Theorem}{thm:exact maxmin}
It is not hard to see that Algorithm~\ref{alg:SO maxmin} is a valid polynomial time separation oracle for the feasible $(\mu,z)$ pairs in LP~\ref{fig:maxmin LP}. Hence, we can solve LP~\ref{fig:maxmin LP} in polynomial time with the ellipsoid method with access to Algorithm~\ref{alg:SO maxmin}. Next, we show how to convert the optimal solution $\mu^*$ of LP~\ref{fig:maxmin LP} to the corresponding maxmin optimal policy $\pi^*$. Here, we invoke Lemma~\ref{lem:convhull}. We query $SO_F$ on $\mu^*$ and we record all weights $w$ that $SO_F$ has queried the MDP solver $\ALG$ on. Let $C=\{w_1,\ldots, w_\ell\}$ be all the queried weights. As $SO_F$ is a polynomial time algorithm, $\ell$ is also polynomial in the input size. By Lemma~\ref{lem:convhull}, we know that $\mu$ is in the convex hull of $\left(\{\mu_w | {w} \in C\}\right)$, which means there exists a set of nonnegative numbers $p_1,\ldots, p_\ell$, such that $\sum_{i=1}^\ell p_i =1$ and $\mu^*=\sum_{i=1}^\ell p_i\cdot \mu_{w_i}$. Clearly, the discounted accumulated feature value of the randomized policy $\sum_{i=1}^\ell p_i\cdot \pi_{w_i}$ equals to $\sum_{i=1}^\ell p_i\cdot \Psi(\pi_{w_i})=\sum_{i=1}^{\ell} p_i\cdot\mu_{w_i}=\mu^*$. We can compute the $p_i$s in poly-time via linear programming and  $\sum_{i=1}^\ell p_i\cdot \pi_{w_i}$ is the maxmin policy.
\end{prevproof}


\subsection {Finding the Maxmin Policy using Follow the Perturbed Leader}\label{sec:exact FPL}
The exact algorithm of Theorem~\ref{thm:exact maxmin} may be computationally expensive to run, as the separation oracle $SO_F$ requires running the ellipsoid method to answer every query, and on top of that we need to run the ellipsoid method with queries to $SO_F$. In this section, we propose a simpler and faster algorithm that is based on the algorithm -- \emph{follow-the-perturbed-leader (FPL)}~\cite{KalaiV05}.

\begin{theorem}\label{thm:FPL maxmin}
	For any $\xi\in (0,1/2)$, with probability at least $1-2\xi$, Algorithm~\ref{alg:FPL maxmin} finds a policy $\pi$ after $T$ rounds of iterations such that its expected reward under any weight from $P_R$ is at least $\max_{\mu\in P_F}\min_{w\in P_R} \mu\cdot w-\frac{k^2\left(6+4\sqrt{\ln 1/\xi}\right)}{\sqrt{T}}$. In every iteration, Algorithm~\ref{alg:FPL maxmin} makes one query to $\ALG$ and $O\left(k^2 \left((\log k)^2+\left((b+\langle\phi \rangle) (|\mathcal{A}||\mathcal{S}|+k)+\log T \right)^2\right)\right)$ queries to $SO_R$, where $b$ is an upper bound on the number of bits needed to specify the transition probability	$P_{sa}$ for any state $s$ and action $a$.
\end{theorem}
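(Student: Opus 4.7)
\begin{prevproof}{Theorem}{thm:FPL maxmin}[Proof Proposal]
The plan is to view the maxmin problem as a two-player zero-sum game with payoff $\mu\cdot w$, value $V^* = \max_{\mu\in P_F}\min_{w\in P_R}\mu\cdot w$, and to have the agent run FPL while the adversary best-responds. Concretely, at round $t$ Algorithm~\ref{alg:FPL maxmin} should (i) draw a random perturbation vector $p_t\in\mathbb{R}^k$, (ii) set $\mu_t \in \argmax_{\mu\in P_F} \mu\cdot\left(\sum_{s<t} w_s + p_t\right)$ by a single call to $\ALG$ on the reward weight $\sum_{s<t} w_s + p_t$ (since linear optimization over $P_F$ is exactly MDP solving, by Lemma~\ref{lem:SO_f}'s observation), and (iii) compute the adversary best response $w_t \in \argmin_{w\in P_R} \mu_t\cdot w$ by running the ellipsoid method over $P_R$ with $SO_R$. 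The output policy corresponds to $\bar\mu = \frac{1}{T}\sum_t \mu_t$, realized as a randomized mixture of the $\pi_t$ returned by $\ALG$.

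For the regret bound, I would invoke the standard FPL guarantee of Kalai--Vempala~\cite{KalaiV05}, instantiated with the right scales. The feature vectors satisfy $\|\mu\|_\infty\leq \frac{1}{1-\gamma}$ and the per-round ``loss'' vectors $w_t$ satisfy $\|w_t\|_\infty\leq 1$, so with a perturbation $p_t$ drawn uniformly from $[0,1/\eta]^k$ and $\eta$ tuned to $\Theta(1/\sqrt{T})$, FPL gives expected regret $O(k^2/\sqrt{T})$ against the best fixed $\mu\in P_F$, which matches the leading constant in the bound. To upgrade in-expectation to the high-probability statement with the $\sqrt{\ln 1/\xi}$ term, I would apply Azuma--Hoeffding twice: once to the martingale $\sum_t(w_t\cdot\mu_t-\mathbb{E}[w_t\cdot\mu_t\mid\text{history}])$ controlling the perturbation noise, and once to $\sum_t \mu_t\cdot w - \sum_t \min_{w'\in P_R}\mu_t\cdot w'$ if needed for any fixed comparator $w$. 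A standard zero-sum-game reduction then converts the agent's per-round regret into the guarantee that $\min_{w\in P_R}\bar\mu\cdot w \geq V^* - O(k^2(1+\sqrt{\ln 1/\xi})/\sqrt{T})$, with the $6+4\sqrt{\ln 1/\xi}$ constant coming from the tight version of FPL plus the Azuma tail.

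For the per-iteration cost, step (ii) is exactly one invocation of $\ALG$, which accounts for the ``one query to $\ALG$'' in the statement. For step (iii), linear optimization of $\mu_t\cdot w$ over $P_R$ via the ellipsoid method takes $O(k^2)$ outer iterations, each making one call to $SO_R$ plus $\text{poly}$ arithmetic on numbers whose bit length I would track. The input $\mu_t$ has bit complexity governed by how precisely $\ALG$ represents the optimal feature vector of an MDP with transition bit complexity $b$, state/action sizes $|\mathcal{S}|,|\mathcal{A}|$, feature bit complexity $\langle\phi\rangle$, and a perturbation whose bit length scales with $\log T$ and $\log k$; summing these contributions yields the $(b+\langle\phi\rangle)(|\mathcal{S}||\mathcal{A}|+k)+\log T$ factor, and squaring arises from the ellipsoid's dependence on the squared bit length of its input, giving exactly the stated $O(k^2[(\log k)^2 + ((b+\langle\phi\rangle)(|\mathcal{A}||\mathcal{S}|+k)+\log T)^2])$ calls to $SO_R$.

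The main obstacle will be obtaining the precise constants in the regret bound simultaneously with the martingale concentration, since a naive FPL analysis only gives $\tilde{O}(k/\sqrt{T})$ for $\ell_\infty$-bounded losses but our payoff $\mu\cdot w$ can be as large as $k/(1-\gamma)$ in magnitude, pushing a factor of $k$ through. I would handle this by applying FPL in a rescaled coordinate system and carefully bookkeeping the diameter of $P_F$ in the $\ell_1$ norm, after which the stated $k^2$ coefficient follows. Bit-complexity bookkeeping in the ellipsoid calls is tedious but routine, and I would defer its details to the supplementary material.
\end{prevproof}
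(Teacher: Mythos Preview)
Your proposal analyzes a different algorithm than the one in the statement. In Step~5 of Algorithm~\ref{alg:FPL maxmin} the adversary does \emph{not} best-respond to the current $\mu_t$; it also runs FPL, choosing $w_t=\argmin_{w\in P_R} w\cdot\bigl(\sum_{i\le t-1}\mu_i+q_t\bigr)$ with its own independent perturbation $q_t$. This mismatch is not cosmetic. First, under your best-response description $w_t$ is a function of the realized $\mu_t$, so the sequence $(w_t)$ is not an adaptive adversary in the sense of Theorem~\ref{thm:FPL} (there $s_t$ may depend only on $d_1,\dots,d_{t-1}$, not on $d_t$); invoking ``the standard FPL guarantee'' for the agent is therefore not justified as written. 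Second, if you did patch that gap, best response has zero regret on the adversary side, so a correct version of your argument would produce the bound $k^2(3+2\sqrt{\ln 1/\xi})/\sqrt{T}$ with failure probability $\xi$, i.e.\ half the constant and half the failure probability of the theorem you are asked to prove.

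The paper's proof matches the algorithm: it conditions on the adversary's perturbation sequence $Q$ and applies the high-probability FPL bound to the agent over the randomness of $P$, then symmetrically conditions on $P$ and applies the same bound to the adversary over $Q$. This yields two inequalities, $\sum_t\mu_t\cdot w_t\ge\max_{\mu}\sum_t\mu\cdot w_t-B$ and $\min_{w}\sum_t\mu_t\cdot w\ge\sum_t\mu_t\cdot w_t-B$ with $B=k^2\sqrt{T}(3+2\sqrt{\ln 1/\xi})$, each holding with probability $1-\xi$; a union bound gives both simultaneously with probability $1-2\xi$, and chaining them gives $\min_w\sum_t\mu_t\cdot w\ge T\cdot V^*-2B$, which is exactly the stated constant $k^2(6+4\sqrt{\ln 1/\xi})/\sqrt{T}$. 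No separate Azuma step is needed because Theorem~\ref{thm:FPL} already packages the martingale concentration. Your per-iteration query accounting is essentially in line with the paper's.
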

FPL is a classical online learning algorithm that solves a problem where a series of decisions $d_1, d_2, ...$ need to be made. Each $d_i$ is from a possibly infinite set $D \subseteq \mathbb{R}^n$. The state $s_t \in \bS \subseteq \mathbb{R}^n$ at step $t$ is observed after the decision $d_t$. The goal is to have the total reward $\sum_t d_t \cdot s_t$ not far from the best expert's reward with hindsight, that is $\max_{d\in D} \sum_t{d \cdot s_t}$. The FPL algorithm guarantees that after $T$ rounds, the regret $\sum_t d_t \cdot s_t-\max_{d\in D} \sum_t{d \cdot s_t}$ scales linearly in $\sqrt{T}$. This guarantee holds for both oblivious and adaptive adversary, and the bound holds both in expectation and with high probability (see Theorem~\ref{thm:FPL} in Section~\ref{sec:missingproof} of the supplementary material for the formal statement).

	\begin{algorithm}[H]
		\begin{algorithmic}[1]
\INPUT $T$: the number of iterations
\STATE Set $\delta:=\nicefrac{1}{k\sqrt{T}}$.
\STATE Arbitrarily pick some policy $\pi_1$, compute $\mu_1 \in P_F$. Arbitrarily pick some reward weights $w_1$, and set $t = 1$.
\WHILE{$t\leq T$}{
\STATE Use $\ALG$ to compute the optimal policy $\pi_t$ and $\mu_t = \Psi(\pi_t)$ that maximizes the expected reward under reward function $ \left(\sum_{i=1}^{t-1} w_i+p_t\right )\cdot \phi(\cdot)$, where $p_t$ is drawn uniformly from $[0,1/\delta]^k$.
\STATE Let $w_t := \mathop{\argmin}_{w\in P_R} w^T (\sum_{i=1}^{t-1}\mu_t + q_t)$, where $q_t$ is drawn uniformly from $[0,1/\delta]^k$.
\STATE $t:= t+1$.}
\ENDWHILE
\STATE Output the randomized policy $\frac{1}{T}\cdot \sum_{t=1}^T \pi_t$.
\end{algorithmic}
\caption{{\sf FPL Maxmin Learning}}
\label{alg:FPL maxmin}
\end{algorithm}

FPL falls into a large class of algorithms that are called low-regret algorithms, as the regret grows sub-linearly in $T$. It is well known that low-regret algorithms can be used to solve two-player zero-sum games approximately. The maxmin problem we face here can also be modeled as a two-player zero-sum games. One player is the agent whose strategy is a policy $\pi$, and the other player is the reward designer whose strategy is a weight $w\in P_R$. The agent's payoff is the reward that it collects using policy $\pi$, which is $\Psi(\pi)\cdot w$, and the designer's payoff is $-\Psi(\pi)\cdot w$. Finding the maxmin strategy for the agent is equivalent to finding the maxmin policy. One challenge here is that the numbers of strategies for both players are infinite. Even if we only consider the pure strategies which correspond to the extreme points of $P_F$ and $P_R$, there are still exponentially many of them. 
Many low-regret algorithms such as multiplicative-weights-update requires explicitly maintaining a distribution over the pure strategies, and update it in every iteration. In our case, these algorithms will take exponential time to finish just a single iteration. This is the reason why we favor the FPL algorithm, as the FPL algorithm only requires finding the best policy giving the past weights, which can be done by the MDP solver $\ALG$. We also show that a similar result holds even if we replace the exact MDP solver with an additive FPTAS $\WALG$.
{The proof of Theorem~\ref{thm:FPL maxmin} can be found in Section~\ref{sec:missingproof} in the supplementary material. Our generalization to cases where we only have access to $\WALG$ is postponed to Section~\ref{sec:approx FPL} in the supplementary material.}  


\notshow{The algorithm works as follows. In the first round, we choose arbitrary $\mu_1 \in P_F$, and $w_1 \in P_R$. Then for every $t > 1$,  choose $\mu_t$ using FPL given all $w_{1:t-1}$. That is, 

\begin{equation}\label{def:FTPL}
\mathop{\text{argmax}}_{\mu \in P_F} \mu\cdot (\mathop{\sum}_{j=1}^{t-1} w_j + p_t), 
\end{equation}

where $p_t$ is drawn from a distribution that is defined by the FPL. Once the agent has chosen the policy and thus the features, the reward weights are chosen adversarially such that $w_t= \text{argmin}_{w \in P_R} w_t^T \mu_t$. The game will go on for $T$ rounds and we claim that the time average $1/T \sum_{t=1}^T \mu_t$ is roughly the max-min solution, and proof is as follows:

\begin{align*}
\mathop {\min}_{w\in P_R} \frac{1}{T} \sum_{t=1}^T w^T \mu_t & \geq  \frac{1}{T} \sum_{t=1}^T w_t^T\mu_t &&\\
& \geq \frac{1}{T}  \mathop{\max}_{\mu \in P_F} \sum_{t=1}^T w_t^T\mu - \text{regret} && (\text {regret bound of FPL})\\
& \geq \mathop{\min}_{w \in P_R} \mathop{\max}_{\mu \in P_F} w^T \mu - \text {regret} && \\
& = \mathop{\max}_{\mu \in P_F} \mathop{\min}_{w \in P_R} w^T \mu - \text{regret}
\end{align*}

The first inequality holds because each $w_t$ was chosen to minimize the reward with respect to each $\mu_t$. Thus the sum of rewards on the right hand side is smaller or equal to the sum of minimal rewards achieved by a common set of weights that is on the left hand side.

The regret bound of FPL is 

\begin{equation}
\sum_{t=1}^T w_t^T\mu_t  \geq  \mathop{\max}_{\mu \in P_F} \sum_{t=1}^T w_t^T\mu - \text{regret} \cdot T,
\end{equation}

and
\begin{equation}
\text{regret} = \sqrt{\frac{RAD}{T}},
\end{equation}

where $R$ is the largest reward possible in the MDP, $A$ is the $L_1$ norm of any possible weights, and  $D$ is the $L_1$ diameter of the feature polytope $P_F$. The (average) regret decreases as the total time step $T$ increases, $\text{regret} \approx O(1/\sqrt{T})$. Thus the time average $\sum_{t=1}^T \mu_t$ is roughly the max-min solution. We can adjust the total time step to achieve the desired accuracy. The perturbation term $p_t$ is chosen randomly according to an exponential distribution independently for each coordinate, that is to choose from a standard exponential distribution $\epsilon_2 \exp(-\epsilon_2 x)$, with $\epsilon_2= \sqrt{D/RAT} $.}

\section{Experiments}\label{sec:exp.}

\paragraph{Gridworld}\label{sec:exp_gw}
We use gridworlds in the first set of experiments. Each grid may have a different "terrain" type such that passing the grid will incur certain reward. For each grid,  a feature vector $\phi(s)$ denotes the terrain type, and the true reward can be expressed as $R^* =  w^* \cdot \phi(s)$. The agent's goal is to move to a {goal} grid with {\it maximal} reward under the {\it worst possible} weights that are consistent with the expert.
  In other words, the maxmin policy is a safe policy, as it avoids possible negative side effects \cite{AmodeiOSCSM16}. In the experiments, we construct the expert policy $\pi_E$ in a small (10$\times$10) demonstration gridworld that contains a subset of the terrain types. One expert policy is provide, and the number of trajectories that we need to estimate the expert policy's cumulative feature follows the sample complexity analysis as in~\cite{SyedS08}. In the following experiment we set $\epsilon=0.5$ which defines $P_R$ and captures how close to optimal the expert is. 

An example behavior is shown in Figure \ref{fig:gw}. There are 5 possible terrain types. The expert policy in Figure~\ref{fig:gw} (left) has only seen 4 terrain types. We compute the maxmin policy in the "real-world" MDP of a much larger size (50$\times$50) with all 5 terrain types using Algorithm~\ref{alg:FPL maxmin} with the reward polytope $P_R$ implicitly specified by the expert policy. Figure~\ref{fig:gw} (middle) shows that our maxmin policy avoids the red-colored terrain that was missing from the demonstration MDP.  To facilitate observation, Figure~\ref{fig:gw} (right) shows the same behavior by an agent trained in a smaller MDP. Figure~\ref{fig:dtgw_occu} compares the maxmin policy to a baseline. The baseline policy is computed in an MDP whose reward weights are the same as the demonstration MDP for the first four terrain types and the fifth terrain weight is chosen at random. Our maxmin policy is much safer than the baseline as it completely avoids the fifth terrain type. It also imitates the expert's behavior by favoring the same terrain types.

\begin{wrapfigure}{r}{0.5\textwidth}
	\vspace{-10pt}
	\centering
	\includegraphics[width = 0.45 \textwidth]{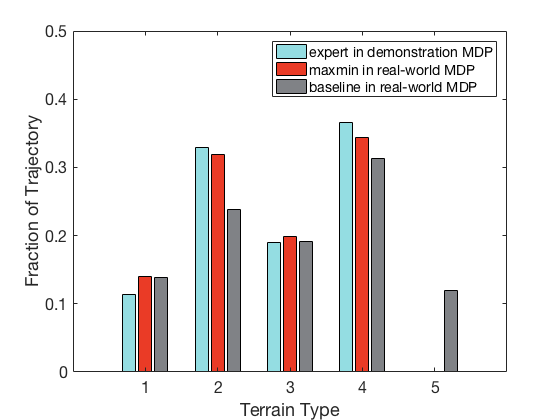}
	\caption{Experiment results comparing our maxmin policy to a baseline. The baseline was computed with a random reward for the fifth terrain and the other four terrain rewards set the same as the demonstration MDP. Our maxmin policy is much safer than the baseline as it completely avoids traversing the fifth (unknown) terrain type. It should also be noticed that the maxmin policy learns from the expert policy while achieving the goal of avoiding potential negative side effects, as the fraction of trajectory of each terrain type closely resemble the expert.  }
	\label{fig:dtgw_occu}
\end{wrapfigure}

We also implemented the maxmin method in gridworlds with a stochastic transition model. The maxmin policy (see Figure~\ref{fig:gw-stoc-1} in Section~\ref{sec:suppexp} of the supplementary material) is more conservative comparing to the deterministic model, and chooses paths that are further away from any unknown terrains. More details and computation time can be found in the supplementary material.

It should be noted that the training and testing MDPs are different. More specifically, the red terrain type is missing from the expert demonstration, and the testing MDP is of a larger size. As discussed in the Introduction, our formulation allows the testing MDP in which the agent operates to be different from the training MDP in which the expert demonstrates, as long as the two MDPs share the same feature space. All of our experiments have this property. To the limit of our knowledge, apprenticeship learning requires the training and testing MDPs to be the same, thus a direct comparison is not possible. For example, in the gridworld experiments, one has to explicitly assign a reward to the "unknown" feature in order to apply apprenticeship learning, which may cause the problem of reward misspecification and negative side effects. Our maxmin solution is robust to such issues. 

\begin{figure}[h]
\centering
	\hspace*{\fill}%
	\begin{subfigure}\centering
		\includegraphics[width=0.22\textwidth]{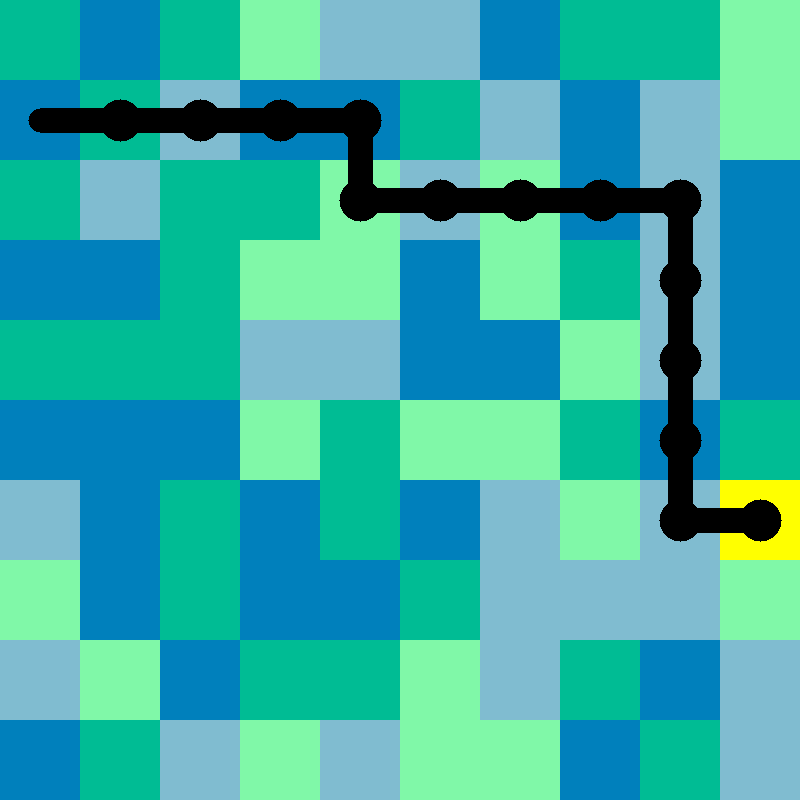}\hfill
	\end{subfigure}
	\begin{subfigure}\centering
		\includegraphics[width=0.22\textwidth]{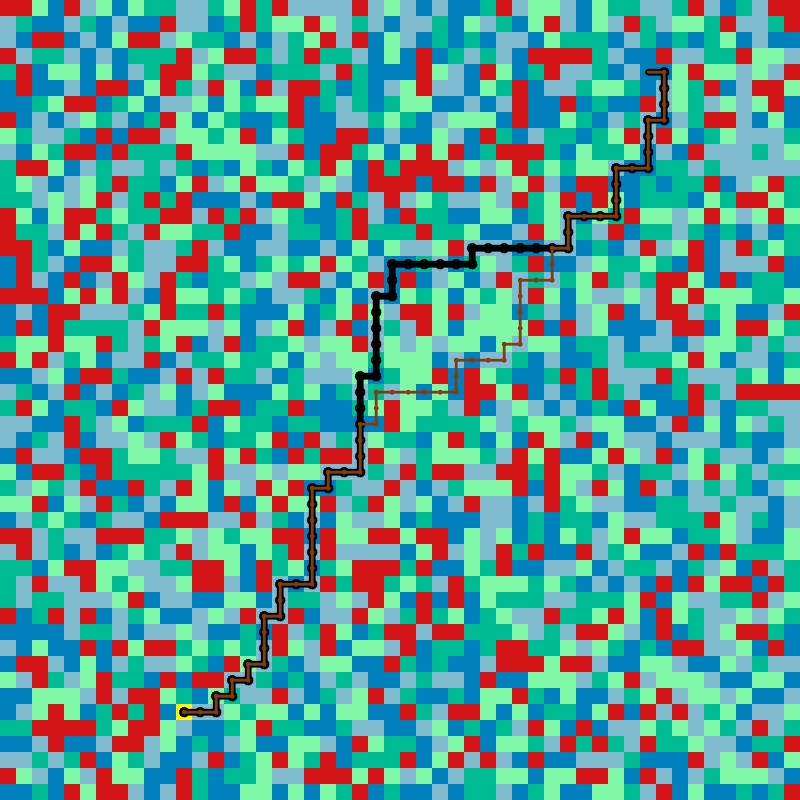}
	\end{subfigure}\hfill
	\begin{subfigure}\centering
		\includegraphics[width=0.22\textwidth]{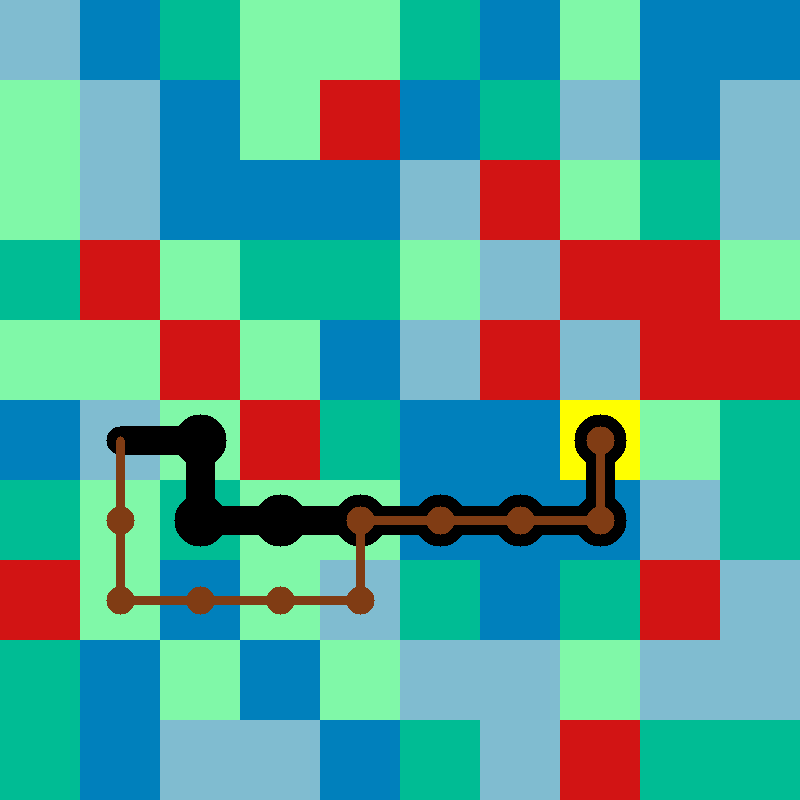}
	\end{subfigure}
	\hspace*{\fill}%
	\caption{An example of maxmin policy in gridworlds.  {\bf Left}: an expert policy in the small demonstration MDP, where 4 of 5 terrain types were used and their weights were randomly chosen. The expert policy guides moving towards the yellow goal grid while preferring the terrains with higher rewards (light blue and light green).  {\bf Middle}: when faced with terrain types (red-colored) that the expert policy never experienced, maxmin policy avoids such terrains and the accompanying negative side effects. The agent learns to operate in a larger (50$\times$50) grid world. 
	{\bf Right}: an agent in a smaller MDP to facilitate observation.
	The maxmin policy generates two possible trajectories.  }
\label{fig:gw}
\end{figure}

\begin{wrapfigure}{r}{0.4\textwidth}
	\centering
	\includegraphics[width = 0.3\textwidth]{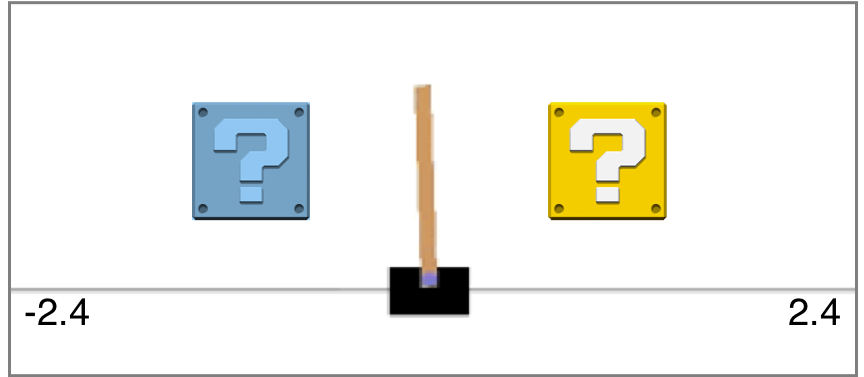}
	\caption{Modified cartpole task with two additional features -- questions blocks to either side of the center. The rewards associated with passing these blocks are not provided to the agent.}
	\label{fig:cartpole_env}
\end{wrapfigure}

\paragraph{CartPole}Our next experiments are based on the classic control task of cartpole and the environment provided by OpenAI Gym \cite{BrockmanCPSSTZ16}.  While we can only solve the problem approximately using model-free learning methods, our experiments show that our FPL-based algorithm can learn a safe policy efficiently for a continuous task. Moreover, if provided with more expert policies, our maxmin learning method can easily accomodate and learn from multiple experts. 

\begin{figure}
	\centering
	\begin{minipage}{0.48\textwidth}
		\centering
		\includegraphics[width = \textwidth]{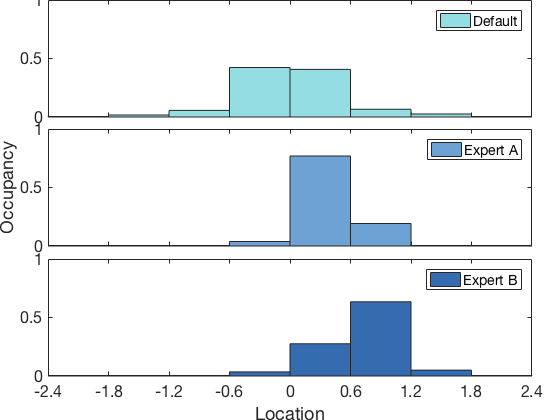}
		\caption{Behavior examples of different policies. Occupancy is defined as the number of steps appearing at a location divided by the total steps. {\bf top}: In the default setting without any question blocks, the travel range is relatively symmetric around the center of the field. {\bf mid}: In the presence of the blue question block to the left, an expert policy $A$ guides movements to the right. {\bf bottom}: In {\it scenario B} where only the yellow question block is present, expert policy $B$ also guides movement to the right. }
		\label{fig:cp_basic}
	\end{minipage}
	\hfill
	\begin{minipage}{0.48\textwidth}
		\centering
		\includegraphics[width = \textwidth]{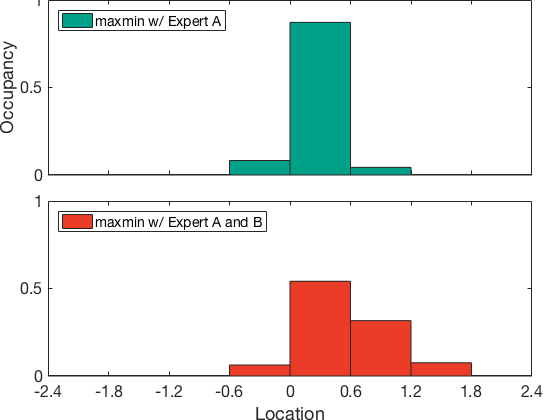}
		\caption{Maxmin policy learnt with different expert policies. {\bf top}: Given {\it Expert A} policy only, the agent learns to stay within a  narrow range near slightly right to the center to avoid both question blocks. Because the agent has no knowledge about the yellow block, a maxmin policy avoids it.  {\bf bottom}: When given both {\it Expert A} and {\it Expert B} policies, the agent learns that it is safe to pass the yellow block, so the range is wider and extends more to the right comparing to the maxmin policy learnt from {\it Expert A} alone.   }
		\label{fig:cp_maxmin}
	\end{minipage}
	\vspace{-10pt}
\end{figure}

We modify the cartpole problem by adding two possible features to the environment as the two question blocks shown in Figure~\ref{fig:cartpole_env}, and more details in the supplementary material. The agent has no idea of what consequences passing these two blocks may have. Instead of knowing the rewards associated with these two blocks, we have expert policies from two other related scenarios. The first expert policy ({\it Expert A}) performs well in {\it scenario A} where only the blue block to the left of the center is present, and the second expert policy ({\it Expert B}) performs well in {\it scenario B} where only the yellow block to the right of the center is present.  The behavior of expert policies in a default scenario (without any question blocks),  and scenarios $A$ and $B$ are shown in Figure~\ref{fig:cp_basic}. It is obvious that comparing with the default scenario, the expert policies in the other two scenarios prefer to travel to the right side. Intuitively, it seems that the blue block incurs negative effects while the yellow block is either neutral or positive.


Now we train the agent in the presence of both question blocks. First, we provide the agent with {\it Expert A} policy alone, and learn a maxmin policy. The maxmin policy's behavior is shown in Figure~\ref{fig:cp_maxmin} ({\bf top}). It tries to avoid both question blocks since it observes that Expert A avoids the blue block and it has no knowledge of the yellow block. Then, we provide both $Expert \ A$ and $Expert \ B$ to the agent, and the resulting maxmin policy guides movement in a wider range extending to the right of the field as shown in Figure~\ref{fig:cp_maxmin} ({\bf bottom}). This time, our maxmin policy also learns from {\it Expert B} that the yellow block is not harmful.

The experiment demonstrates that our maxmin method works well with complex reinforcement learning tasks where only approximate MDP solvers are available.

\section{Discussion}

In this paper, we provided a theoretical treatment of the problem of reinforcement learning in the presence of mis-specifications of the agent's reward function, by leveraging data provided by experts.  The posed optimization can be solved exactly in polynomial-time by using the ellipsoid methods, but a more practical solution is provided by an algorithm which takes a follow-the-perturbed-leader approach. Our experiments illustrate the fact that this approach can successfully learn robust policies from imperfect expert data, in both discrete and continuous environments. It will be interesting to see whether our maxmin formulation can be combined with other methods in RL such as hierarchical learning to produce robust solutions in larger problems.
\newpage
\bibliographystyle{plain}
\bibliography{Yang.bib}

\begin{thebibliography}{10}

\bibitem{AbbeelN04}
Pieter Abbeel and Andrew~Y Ng.
\newblock Apprenticeship learning via inverse reinforcement learning.
\newblock In {\em Proceedings of the twenty-first international conference on
  Machine learning}, page~1. ACM, 2004.

\bibitem{AminJS17}
Kareem Amin, Nan Jiang, and Satinder Singh.
\newblock Repeated inverse reinforcement learning.
\newblock In {\em Advances in Neural Information Processing Systems}, pages
  1813--1822, 2017.

\bibitem{AmodeiOSCSM16}
Dario Amodei, Chris Olah, Jacob Steinhardt, Paul Christiano, John Schulman, and
  Dan Man{\'e}.
\newblock Concrete problems in ai safety.
\newblock {\em arXiv preprint arXiv:1606.06565}, 2016.

\bibitem{Ben-TalHKM15}
Aharon Ben-Tal, Elad Hazan, Tomer Koren, and Shie Mannor.
\newblock Oracle-based robust optimization via online learning.
\newblock {\em Operations Research}, 63(3):628--638, 2015.

\bibitem{BoydV07}
Stephen Boyd and Lieven Vandenberghe.
\newblock Localization and cutting-plane methods.
\newblock {\em From Stanford EE 364b lecture notes}, 2007.

\bibitem{BrockmanCPSSTZ16}
Greg Brockman, Vicki Cheung, Ludwig Pettersson, Jonas Schneider, John Schulman,
  Jie Tang, and Wojciech Zaremba.
\newblock Openai gym.
\newblock {\em arXiv preprint arXiv:1606.01540}, 2016.

\bibitem{CaiDW13a}
Yang Cai, Constantinos Daskalakis, and S.~Matthew Weinberg.
\newblock {Reducing Revenue to Welfare Maximization : Approximation Algorithms
  and other Generalizations}.
\newblock In {\em the 24th Annual ACM-SIAM Symposium on Discrete Algorithms
  (SODA)}, 2013.

\bibitem{CL}
Nicolo Cesa-Bianchi and Gabor Lugosi.
\newblock {\em Prediction, Learning, and Games}.
\newblock Cambridge University Press, 2006.

\bibitem{GrotschelLS81}
Martin Gr{\"o}tschel, L{\'a}szl{\'o} Lov{\'a}sz, and Alexander Schrijver.
\newblock {The Ellipsoid Method and its Consequences in Combinatorial
  Optimization}.
\newblock {\em Combinatorica}, 1(2):169--197, 1981.

\bibitem{HadfieldMARD17}
Dylan Hadfield-Menell, Smitha Milli, Pieter Abbeel, Stuart~J Russell, and Anca
  Dragan.
\newblock Inverse reward design.
\newblock In {\em Advances in Neural Information Processing Systems}, pages
  6768--6777, 2017.

\bibitem{HutterP05}
Marcus Hutter and Jan Poland.
\newblock Adaptive online prediction by following the perturbed leader.
\newblock {\em Journal of Machine Learning Research}, 6(Apr):639--660, 2005.

\bibitem{Iyengar05}
Garud~N Iyengar.
\newblock Robust dynamic programming.
\newblock {\em Mathematics of Operations Research}, 30(2):257--280, 2005.

\bibitem{KalaiV05}
Adam Kalai and Santosh Vempala.
\newblock Efficient algorithms for online decision problems.
\newblock {\em Journal of Computer and System Sciences}, 71(3):291--307, 2005.

\bibitem{KarpP82}
Richard~M. Karp and Christos~H. Papadimitriou.
\newblock On linear characterizations of combinatorial optimization problems.
\newblock {\em {SIAM} J. Comput.}, 11(4):620--632, 1982.

\bibitem{Khachiyan79}
Leonid~G. Khachiyan.
\newblock {A Polynomial Algorithm in Linear Programming}.
\newblock {\em Soviet Mathematics Doklady}, 20(1):191--194, 1979.

\bibitem{LimXM13}
Shiau~Hong Lim, Huan Xu, and Shie Mannor.
\newblock Reinforcement learning in robust markov decision processes.
\newblock In {\em Advances in Neural Information Processing Systems}, pages
  701--709, 2013.

\bibitem{MorimotoD05}
Jun Morimoto and Kenji Doya.
\newblock Robust reinforcement learning.
\newblock {\em Neural computation}, 17(2):335--359, 2005.

\bibitem{NgR00}
Andrew~Y Ng, Stuart~J Russell, et~al.
\newblock Algorithms for inverse reinforcement learning.
\newblock In {\em Icml}, pages 663--670, 2000.

\bibitem{NilimEG05}
Arnab Nilim and Laurent El~Ghaoui.
\newblock Robust control of markov decision processes with uncertain transition
  matrices.
\newblock {\em Operations Research}, 53(5):780--798, 2005.

\bibitem{SyedBS08}
Umar Syed, Michael Bowling, and Robert~E Schapire.
\newblock Apprenticeship learning using linear programming.
\newblock In {\em Proceedings of the 25th international conference on Machine
  learning}, pages 1032--1039. ACM, 2008.

\bibitem{SyedS08}
Umar Syed and Robert~E Schapire.
\newblock A game-theoretic approach to apprenticeship learning.
\newblock In {\em Advances in neural information processing systems}, pages
  1449--1456, 2008.

\end{thebibliography}
\newpage
\section*{Supplementary Material}

\section{Missing Proofs from Section~\ref{sec:alg}}\label{sec:missingproof}

\paragraph{The Ellipsoid Method} The following theorem, reworded from~\cite{Khachiyan79,GrotschelLS81, KarpP82}, states that given a separation oracle of a convex polytope, the ellipsoid method can optimize any linear function over the convex polytope in polynomial time.

\begin{theorem}[Ellipsoid Method]\label{thm:ellipsoid}(\cite{Khachiyan79,GrotschelLS81,KarpP82}) Let $P$ be a $d$-dimensional closed, convex subset of $\mathbb{R}^d$ defined as the intersection of finitely many halfspaces, and $SO$ be a {poly-time} separation oracle for $P$. Then it is possible to
find an element in $\argmax_{{x} \in P} \{{c} \cdot {x}\}$ for any ${c} \in \mathbb{R}^d$ (i.e. solve linear programs) in time polynomial in $d$ and $\langle P \rangle$ using the ellipsoid method, if $P$ can be described implicitly
using $\langle P\rangle$ bits.~\footnote{We say a polytope $P$ can be described implicitly using $\ell$ bits if there exists a description of the polytope $P$ such that all constraints only use coefficients with bit complexity $\ell$.}

\end{theorem}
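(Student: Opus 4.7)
The plan is to reduce linear optimization over $P$ to a sequence of feasibility queries, each resolved by the classical ellipsoid algorithm driven by the separation oracle $SO$. First, I would establish the geometric bounds needed for the ellipsoid method: since $P$ is described implicitly by $\langle P\rangle$ bits, standard vertex bit-complexity results imply that $P$ is contained in a ball of radius $R = 2^{\poly(d,\langle P\rangle)}$ about the origin, and, after a symbolic perturbation of the defining inequalities to handle low-dimensional or empty $P$, any non-empty such polytope contains a ball of radius at least $r = 2^{-\poly(d,\langle P\rangle)}$.

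Next, I would implement a feasibility routine for any convex set $Q$ accessible through a separation oracle $SO_Q$. Initialize $E_0$ to be the ball of radius $R$ around the origin; at step $i$, query $SO_Q$ on the center $c_i$ of $E_i$. If $c_i \in Q$, return $c_i$; otherwise take the separating hyperplane $h$ returned by $SO_Q$, form the half-ellipsoid $E_i \cap \{x : h\cdot x \leq h\cdot c_i\}$, and let $E_{i+1}$ be the minimum-volume ellipsoid enclosing it. The closed-form update yields $\mathrm{vol}(E_{i+1}) \leq e^{-1/(2(d+1))}\,\mathrm{vol}(E_i)$, so after $N = O(d^2 \log(R/r)) = \poly(d,\langle P\rangle)$ iterations either a point of $Q$ has been found, or $\mathrm{vol}(E_N)$ has dropped below that of a ball of radius $r$, which by the above geometric bound certifies $Q = \emptyset$.

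To solve the linear program, I would binary-search over the objective value $t$ and run this feasibility routine on $P_t := P \cap \{x : c\cdot x \geq t\}$; a separation oracle for $P_t$ is built from $SO$ by first checking membership in $P$ and, when the input lies in $P$, verifying the single extra inequality $c\cdot x \geq t$. The vertex bit-complexity bound implies the optimum lies on a lattice of spacing $2^{-\poly(d,\langle P\rangle,\langle c\rangle)}$, so $\poly(d,\langle P\rangle,\langle c\rangle)$ bisection steps suffice. The near-optimal interior point returned by the final feasibility call is then rounded to an exact vertex of the optimal face, either via simultaneous Diophantine approximation or by identifying tight constraints with additional $SO$ queries and solving the resulting linear system.

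The main obstacle is numerical precision: the minimum-volume enclosing-ellipsoid update involves square roots and naively requires infinite precision. The standard remedy is to carry out all ellipsoid arithmetic at $\poly(d,\langle P\rangle)$ bits of precision, absorbing the rounding error into a slightly weaker volume-shrinkage rate such as $e^{-1/(5d)}$, which still yields a polynomial iteration count. Ensuring that the rounded final point is provably exactly optimal rather than merely $\epsilon$-close is the delicate step, and constitutes the technical heart of the Khachiyan and Gr\"otschel--Lov\'asz--Schrijver arguments.
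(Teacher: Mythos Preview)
The paper does not prove this theorem at all: it is stated as a known result, ``reworded from~\cite{Khachiyan79,GrotschelLS81,KarpP82}'', and is invoked as a black box in the analysis of the maxmin-learning algorithm. There is therefore no in-paper proof to compare your proposal against.

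For what it is worth, your sketch follows the standard Gr\"otschel--Lov\'asz--Schrijver development (outer ball/inner ball from bit complexity, volume-shrinking ellipsoid iteration, binary search on the objective, and a final rounding step), which is exactly the argument the cited references supply. If your goal is to reproduce the paper's treatment, you should simply cite the result rather than reprove it; if your goal is to include a self-contained proof, your outline is the right one, with the usual caveat that the perturbation/rounding step to handle lower-dimensional $P$ and to certify exact optimality is where the technical work lies.
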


\begin{prevproof}{Lemma}{lem:SO_f}
	Lemma~\ref{lem:bit complexity of P_F}  shows that $P_F$ can be implicitly described using $\langle P_F\rangle = \poly(\langle M\setminus R \rangle,\langle \phi \rangle,k)$ bits. Maximizing any linear function $w\cdot \mu$ can be solved by querying $\ALG$ on MDP $M\setminus R$ with reward function $w\cdot \phi(\cdot)$. Since MDP $\left (M\setminus R, w\cdot \phi(\cdot)\right)$ has bit complexity polynomial in $\langle M\setminus R \rangle$, $\langle \phi \rangle$, $k$, and $\langle w \rangle$,  we can solve the linear optimization problem in time $\poly(\langle P_F\rangle,k, \langle w \rangle)$. By Lemma~\ref{lem:sep-opt}, we can solve the separation problem in time $\poly(\langle P_F\rangle, k, \langle y \rangle)$ on any input $y\in \mathbb{R}^k$. Hence, we can design a polynomial time separation oracle.
\end{prevproof}

\begin{lemma}\label{lem:bit complexity of P_F}
Polytope $P_F$ for any MDP without reward function $M\setminus R$ can be implicitly described using $\poly(\langle M\setminus R \rangle,\langle \phi \rangle,k)$ bits.
\end{lemma}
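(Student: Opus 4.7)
The plan is to express $P_F$ as the linear image of the state--action occupancy polytope and then bound the bit complexity of its vertices, from which a polynomial bit complexity description follows by a standard Cramer's-rule argument.

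First, I would set up the occupancy polytope $P_\rho \subseteq \mathbb{R}^{|\mathcal{S}||\mathcal{A}|}$ consisting of all $\rho$ satisfying the Bellman flow constraints
\[
\sum_{a} \rho(s,a) \;=\; D(s) + \gamma \sum_{s',a'} P_{s'a'}(s)\,\rho(s',a') \qquad \forall s\in\mathcal{S},\qquad \rho(s,a) \geq 0.
\]
By standard MDP theory, $\rho \in P_\rho$ iff $\rho$ is the discounted state--action occupancy of some stationary (possibly stochastic) policy, and the vertices of $P_\rho$ are precisely the occupancies of deterministic stationary policies. The coefficients in these constraints are drawn from $\{0, 1, \gamma, P_{sa}(s'), D(s)\}$, so $P_\rho$ is described using $\poly(\langle M\backslash R\rangle)$ bits. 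Next, define the linear map $\Phi:\mathbb{R}^{|\mathcal{S}||\mathcal{A}|}\to\mathbb{R}^k$ by $\Phi(\rho)=\sum_{s,a}\rho(s,a)\phi(s)$. Since $\Psi(\pi)=\Phi(\rho^\pi)$ for the occupancy $\rho^\pi$ of any policy $\pi$, we have $P_F = \Phi(P_\rho)$, so $P_F$ is a polytope whose vertices are images under $\Phi$ of (a subset of) the vertices of $P_\rho$.

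I would then bound the bit complexity of each vertex of $P_F$. A vertex of $P_\rho$ corresponds to a deterministic policy $\pi^\star$, and on the support defined by $\pi^\star$ the occupancy satisfies a linear system of the form $(I - \gamma P^{\pi^\star})^{\top} x = D$, where $P^{\pi^\star}$ is the $|\mathcal{S}|\times|\mathcal{S}|$ transition matrix induced by $\pi^\star$. By Cramer's rule, the entries of $x$ are ratios of determinants of matrices whose entries have bit complexity $\poly(\langle M\backslash R\rangle)$; hence each coordinate of the vertex has bit complexity $\poly(\langle M\backslash R\rangle)$. Applying $\Phi$, whose coefficients are the features $\phi(s)$ of bit complexity at most $\langle\phi\rangle$, yields vertices of $P_F$ whose coordinates have bit complexity $\poly(\langle M\backslash R\rangle, \langle\phi\rangle, k)$.

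Finally, I would invoke the standard fact that any polytope $Q\subseteq\mathbb{R}^d$ whose vertices all have rational coordinates of bit complexity at most $L$ admits an inequality description whose coefficients have bit complexity $\poly(d,L)$: each facet normal is obtained (up to sign) as a $(d{-}1)\times(d{-}1)$ subdeterminant of the coordinates of the $d$ vertices spanning that facet, and by the Hadamard bound such a determinant has bit complexity $O(d\cdot L)$. Applying this to $P_F$ with $d=k$ and $L=\poly(\langle M\backslash R\rangle,\langle\phi\rangle,k)$ gives the claimed implicit description using $\poly(\langle M\backslash R\rangle,\langle\phi\rangle,k)$ bits.

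The main obstacle is the care required in the Cramer's-rule step: I must verify that $(I-\gamma P^{\pi^\star})$ is indeed invertible with entries of polynomial bit complexity (using $\gamma<1$ and rational transition entries), and keep track of the fact that after passing through $\Phi$ the resulting polytope lives in $\mathbb{R}^k$ rather than in the much larger ambient space, so the vertex-to-facet conversion only multiplies bit complexity by a polynomial in $k$ rather than in $|\mathcal{S}||\mathcal{A}|$. Once that is handled carefully, the bound follows immediately.
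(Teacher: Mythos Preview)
Your proposal is correct and takes essentially the same approach as the paper: both rest on the occupancy-measure polytope (the Bellman flow constraints) together with the linear feature map $\mu=\sum_{s}\phi(s)\sum_{a}x_{sa}$. The paper's proof simply writes down this extended formulation and observes that every coefficient has bit complexity at most $\langle M\setminus R\rangle$ or $\langle\phi\rangle$, stopping there and leaving implicit that an extended formulation with small coefficients yields a projection with polynomial facet/vertex complexity. You make that implicit step explicit: you bound the vertex complexity of $P_\rho$ via Cramer's rule on $(I-\gamma P^{\pi^\star})^\top$, push through $\Phi$, and then convert vertex complexity to facet complexity in $\mathbb{R}^k$ via the Hadamard bound. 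So your argument is a more careful version of the same route rather than a genuinely different one.
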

\begin{proof} 
The following constraints explicitly describe all $\mu \in P_F$, where $x_{sa}$s correspond to the occupancy measure of some policy $\pi$. 
\begin{align*}
	&\mu = \sum_{s} \phi(s) \sum_{a} x_{s a} &\\
& \sum_a x_{sa} = \Pr(s_0 = s) + \gamma \sum_{s',a}x_{s'a} P_{sa} \qquad \forall s &\\	
& x_{sa} \geq 0 &
\end{align*}
Our statement follows from the fact that all the coefficients in these constraints have bit complexity $\langle M\setminus R \rangle$ or $\langle \phi \rangle$.
\end{proof}

\paragraph{Intuition behind Lemma~\ref{lem:convhull}} The intuition behind Lemma~\ref{lem:convhull} is that the separation oracle $SO_F$ tries to search over all possible weights $w$ to find one to separate the query point $\mu$ from  $P_F$ using the ellipsoid method. Along the way, it queries a set of weights (this is our set $C$) on $\ALG$ trying to find a separating weight $w$ such that $\mu\cdot w>\mu_w \cdot w$. If such a separating weight is found, $SO_F$ terminates immediately and outputs ``\textsc{No}'' together with the corresponding separating hyperplane. The $SO_F$ says ``\textsc{Yes}'' only when it has searched over a polynomial number of weights and concludes that there is no possible weight to separate $\mu$. The reason that $SO_F$ can draw such a conclusion is due to the ellipsoid method. In particular, when $SO_F$ says ``\textsc{Yes}'',  the correctness of the ellipsoid algorithm implies that $\mu$ is in the convex hull of all the extreme points of $P_F$ that have been outputted by the $\ALG$.

\paragraph{Follow-the-Perturbed-Leader} Kalai and Vempala~\cite{KalaiV05} proposed the FPL algorithm and showed that in expectation, the regret is small against any oblivious adversary.~\cite{HutterP05} showed that the same regret bound extends to settings with adaptive adversary. To obtain a high probability bound, one can construct a martingale to connect the actual reward and the expected reward obtained by the agent, then apply the Hoeffding-Azuma inequality.

\begin{theorem}[Follow-the-Perturbed-Leader]~\cite{KalaiV05,HutterP05, CL}\label{thm:FPL}
	Let $d_1,\ldots,d_T$ be a sequence of decisions. Let $s_1,\ldots, s_T$ be a state sequence chosen by an adaptive adversary, that is, $s_t$ can be selected based on all the previous states $s_1,\ldots,s_{t-1}$ and all the previous decisions $d_1,\ldots, d_{t-1}$ for every $t\leq T$. If we let $d_t$ be $\argmax_{d\in D} d\cdot \left( \sum_{i=1}^{t-1} s_{i}+p_t\right)$, where $p_t$ is drawn uniformly from $[0,1/\delta]^n$ for some $\delta>0$,  then 
	$$\E\left[\sum_{t=1}^T d_t\cdot s_t-\max_{d\in D} \sum_{t=1}^T d\cdot s_t\right]\geq -\delta\cdot C_1C_2 T-\frac{2C_3}{\delta}.$$ $C_1$ is an upper bound of $||s||_1$ for all $s\in \bS$, $C_2$ is an upper bound of $|d\cdot s|$ for all $d\in D$ and $s\in \bS$, and $C_3$ is an upper bound of $||d||_1$ for all $d\in D$. Moreover, for all $\xi\geq 0$, with probability at least $1-\xi$, the actual accumulative reward under any adaptive adversary satisfies, $$\sum_{t=1}^T d_t\cdot s_t-\max_{d\in D} \sum_{t=1}^T d\cdot s_t\geq -\delta\cdot C_1C_2 T-\frac{2C_3}{\delta}-2C_2\sqrt{T\ln \frac{1}{\xi}}.$$ 
\end{theorem}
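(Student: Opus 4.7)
The plan is to view Algorithm~\ref{alg:FPL maxmin} as a two-player zero-sum game in which both the agent and the adversary run FPL, bound each player's regret via Theorem~\ref{thm:FPL}, and then convert low regret into an approximate minimax guarantee through the standard time-averaging reduction. Step~4 of the algorithm is precisely the FPL update for the agent, whose decisions are $\mu_t \in P_F$ against the state sequence $w_1, w_2, \ldots$; Step~5 is the symmetric FPL update for the adversary, whose decisions are $w_t \in P_R$ against the state sequence $\mu_1, \mu_2, \ldots$. Because each player's opponent reacts to past play, the adaptive-adversary form of Theorem~\ref{thm:FPL}, which furnishes both an expectation bound and an $O(\sqrt{T\ln(1/\xi)})$ high-probability concentration term, applies on both sides.

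Next I would instantiate the constants $C_1, C_2, C_3$ in Theorem~\ref{thm:FPL} for each player. Since $P_R \subseteq [-1,1]^k$ we have $\|w\|_1 \leq k$; discounted feature vectors in $P_F$ satisfy $\|\mu\|_1 = O(k)$ (absorbing the $1/(1-\gamma)$ scaling into the normalization implicit in the theorem statement), and consequently $|\mu \cdot w| = O(k)$. With $\delta = 1/(k\sqrt{T})$ as prescribed by the algorithm, Theorem~\ref{thm:FPL} then yields, for each player, a regret of at most $O(k^2\sqrt{T})$ plus an additive $O(k\sqrt{T\ln(1/\xi)})$ concentration term with probability $\geq 1-\xi$; a union bound over the two players gives failure probability at most $2\xi$. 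Setting $\bar\mu := \frac{1}{T}\sum_t \mu_t$, the chain
\begin{align*}
\min_{w \in P_R} \bar\mu \cdot w
&= \frac{1}{T}\min_{w \in P_R}\sum_t \mu_t \cdot w \geq \frac{1}{T}\sum_t \mu_t \cdot w_t - \frac{R_{\text{adv}}}{T}\\
&\geq \frac{1}{T}\max_{\mu \in P_F}\sum_t \mu \cdot w_t - \frac{R_{\text{agent}} + R_{\text{adv}}}{T}\\
&\geq \max_{\mu \in P_F}\min_{w \in P_R}\mu \cdot w - \frac{R_{\text{agent}} + R_{\text{adv}}}{T},
\end{align*}
whose last inequality is the minimax theorem, delivers the stated $k^2(6 + 4\sqrt{\ln(1/\xi)})/\sqrt{T}$ bound after collecting constants (using $k \geq 1$ to absorb the lower-order $O(k)$ and $O(k\sqrt{\ln 1/\xi})$ contributions into $O(k^2)$ and $O(k^2\sqrt{\ln 1/\xi})$). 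The output randomized policy $\frac{1}{T}\sum_t \pi_t$ has discounted feature vector exactly $\bar\mu$, so its expected reward under every $w \in P_R$ meets the guarantee.

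For the per-iteration query count, Step~4 is exactly one invocation of $\ALG$ on the MDP with reward weights $\sum_{i<t}w_i + p_t$, matching the single $\ALG$ query. Step~5 is a linear minimization over $P_R$, solved via Theorem~\ref{thm:ellipsoid} by repeatedly querying $SO_R$; the ellipsoid method on a $k$-dimensional polytope with objective bit complexity $L$ makes $O(k^2 L^2)$ such queries. I would bound $L$ in three parts: (i) the explicit LP description of $P_F$ in Lemma~\ref{lem:bit complexity of P_F} implies any vertex $\mu_i$ returned by $\ALG$ has coordinate bit-complexity $O((b + \langle\phi\rangle)(|\mathcal{A}||\mathcal{S}|+k))$; (ii) the perturbation $q_t \in [0, k\sqrt{T}]^k$ contributes $O(\log k + \log T)$ bits per coordinate; and (iii) summing up to $T-1$ such vectors increases bit-complexity by an additive $O(\log T)$. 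Substituting into the $O(k^2 L^2)$ ellipsoid bound yields precisely $O(k^2((\log k)^2 + ((b+\langle\phi\rangle)(|\mathcal{A}||\mathcal{S}|+k) + \log T)^2))$ queries to $SO_R$.

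The main obstacle I anticipate is the bookkeeping in (iii): one must argue that each $\mu_i$ returned by the MDP solver is a vertex of $P_F$ whose rational representation is controlled by the LP of Lemma~\ref{lem:bit complexity of P_F}, and that $T$-fold sums with perturbation do not inflate bit-complexity beyond the additive $\log T$ term. The rest of the argument---the two-player zero-sum interpretation, the minimax-via-time-averaging reduction, and the black-box applications of Theorems~\ref{thm:FPL} and~\ref{thm:ellipsoid}---is entirely standard.
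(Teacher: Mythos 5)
Your proposal does not prove the statement at hand. The statement is Theorem~\ref{thm:FPL} --- the FPL regret guarantee itself, with its expectation bound and high-probability tail against an adaptive adversary. What you have sketched is instead the proof of Theorem~\ref{thm:FPL maxmin}: the two-player zero-sum interpretation of Algorithm~\ref{alg:FPL maxmin}, the instantiation of $C_1,C_2,C_3$, the time-averaging chain of inequalities ending in the minimax theorem, and the ellipsoid-method query count for Step~5. Every one of those steps is \emph{downstream} of Theorem~\ref{thm:FPL}, and indeed your very first move is to ``bound each player's regret via Theorem~\ref{thm:FPL}'' --- so as a proof of Theorem~\ref{thm:FPL} the argument is circular: it assumes the conclusion as a black box. (In the paper, your argument corresponds almost verbatim to the proof of Theorem~\ref{thm:FPL maxmin} in Section~\ref{sec:missingproof}; Theorem~\ref{thm:FPL} itself is not re-derived there but quoted from prior work, with a one-paragraph indication of how the pieces fit together.)

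A genuine proof of Theorem~\ref{thm:FPL} needs an entirely different set of ingredients, none of which appear in your proposal: (i) the be-the-leader induction, showing that playing the perturbed hindsight leader at every step collects at least $\max_{d\in D}\sum_{t=1}^T d\cdot s_t$ minus the perturbation cost, which is at most $2C_3/\delta$ because $\|p_t\|_\infty\leq 1/\delta$ and $\|d\|_1\leq C_3$; (ii) a stability argument converting ``be the leader'' into ``follow the leader'': since $p_t$ is uniform on the cube $[0,1/\delta]^n$, the distributions of $\sum_{i<t}s_i+p$ and $\sum_{i\leq t}s_i+p$ agree except on a set of measure at most $\delta\|s_t\|_1\leq \delta C_1$, and on that set the per-round loss differs by at most $C_2$, yielding the $\delta C_1C_2T$ term; (iii) the extension from oblivious to adaptive adversaries (this is the contribution of \cite{HutterP05}), which exploits the fact that $p_t$ is drawn fresh and independently in each round so that the conditional expected regret bound holds round by round regardless of how $s_t$ depends on the history; and (iv) for the high-probability clause, the construction of the martingale difference sequence $d_t\cdot s_t-\E\left[d_t\cdot s_t\,\middle|\,s_{1:t-1},d_{1:t-1}\right]$ with increments bounded by $2C_2$, to which the Hoeffding--Azuma inequality is applied to produce the extra $2C_2\sqrt{T\ln(1/\xi)}$ term. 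Your closing remark that ``the black-box applications of Theorems~\ref{thm:FPL} and~\ref{thm:ellipsoid} \ldots\ [are] entirely standard'' is precisely the gap: for this statement, Theorem~\ref{thm:FPL} is not available as a black box --- it is the thing to be proved.
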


Using Theorem~\ref{thm:FPL}, we are ready to prove Theorem~\ref{thm:FPL maxmin}.

\begin{prevproof}{Theorem}{thm:FPL maxmin}
	We use $P$ to denote the sequence $p_1,\ldots, p_T$ and $Q$ to denote the sequence $q_1,\ldots, q_T$. First, notice that every realization of $Q$ defines a deterministic adaptive adversary for the agent. In the setting of Algorithm~\ref{alg:FPL maxmin}, we can take $C_1$ to be $k$, $C_2$ to be $k^2$, and $C_3$ to be $k$. By Theorem~\ref{thm:FPL} (Section~\ref{sec:missingproof} of the supplementary material), we know that for all $\xi\geq 0$, $\Pr_{P\sim U[0,1/\delta]^{kT}} [\sum_{t=1}^T \mu_t\cdot w_t-\max_{\mu\in P_F} \sum_{t=1}^T \mu\cdot w_t \geq -k^2\sqrt{T}(3+2\sqrt{\ln 1/\xi}) | Q]\geq 1-\xi$ for every realization of $Q$. Similarly, every realization of $P$ also defines a deterministic adaptive adversary for the designer, and by Theorem~\ref{thm:FPL} , we know that $\Pr_{Q\sim U[0,1/\delta]^{kT}} [-\sum_{t=1}^T \mu_t\cdot w_t+\min_{w\in P_R} \sum_{t=1}^T \mu_t\cdot w \geq -k^2\sqrt{T}(3+2\sqrt{\ln 1/\xi})| P]\geq 1-\xi$ for any realization of $P$. Let $B= k^2\sqrt{T}\left(3+2\sqrt{\ln 1/\xi}\right)$. By the union bound, with probability at least $1-2\xi$ over the randomness of $P$ and $Q$  \begin{equation}\label{eq:agent regret}
\sum_{t=1}^T \mu_t\cdot w_t-\max_{\mu\in P_F} \sum_{t=1}^T \mu\cdot w_t \geq -B
  \end{equation}
  and 
\begin{equation}\label{eq:designer regret}
-\sum_{t=1}^T \mu_t\cdot w_t+\min_{w\in P_R} \sum_{t=1}^T \mu_t\cdot w \geq -B
\end{equation}
	
	Next, we argue that $\frac{1}{T}\cdot \sum_{t=1}^T \pi_t$ is an approximate maxmin policy.
	\begin{align*}
	\min_{w\in P_R}\sum_{t=1}^T \mu_t\cdot w&\geq \sum_{t=1}^T \mu_t\cdot w_t	-B & (\text{Eq.~\eqref{eq:designer regret}})\\
	&\geq \max_{\mu\in P_F} \sum_{t=1}^T \mu\cdot w_t-2B & (\text{Eq.~\eqref{eq:agent regret}})\\
	&\geq T\cdot \max_{\mu\in P_F}\min_{w\in P_R} \mu\cdot w-2B &
	\end{align*}
The last inequality is because that on the LHS (line 2) the designer is choosing a fixed strategy $\frac{1}{T}\cdot\sum_{t=1}^T w_t$, while on the RHS (line 3) the designer can choose the worst possible strategy for the agent. Therefore, if the agent uses policy $\frac{1}{T}\cdot \sum_{t=1}^T \pi_t$, it guarantees expected reward $\max_{\mu\in P_F}\min_{w\in P_R} \mu\cdot w-2B/T$.

Finally, in every iteration $t$, we query $\ALG$ once to compute $\pi_t$ and $\mu_t$, and we use the ellipsoid method to find $w_t$ using $O(k^2 (\langle M\backslash R \rangle^2+(\log T)^2))$ queries to $SO_R$ and poly$(k, \langle M\backslash R \rangle, \log T)$ regular computation steps. During each query, $SO_R$ calls $\ALG$. Thus, our result is a reduction from the maxmin learning problem to simply solving an MDP under given weights. Any improvement on $\ALG$ will also improve the running time of Algorithm~\ref{alg:FPL maxmin}. We discuss the empirical running time in section~\ref{sec:suppexp} of the supplementary material.
\end{prevproof}

\section{Maxmin Learning using an Approximate MDP Solver}\label{sec:approx FPL}


In the previous sections, we assume that we have access to an MDP solver $\ALG$ that solves any MDP $M$ optimally in time polynomial in $\langle M\rangle$. However, in practice, solving large-size MDPs, e.g. continuous control problems, exactly could be computationally expensive or infeasible. Our FPL-based algorithm also works in cases where we can only solve MDPs approximately. 


{Suppose we are given access to an additive FPTAS $\WALG$ for solving MDPS. More specifically, $\WALG$ finds in time polynomial in $\langle M\rangle, 1/\eta$ a solution $(\pi^*_{\eta}, \mu^*_{\eta})$, such that $\E_{s_0\sim D}[V^{\pi^*_{\eta}}(s_0) | M]\geq \max_\pi \E_{s_0 \sim D}[V^\pi(s_0) | M] - \eta $. Notice that  the weights of $M$'s reward function have $L_1$-norm $L$.}

We face two challenges when we replace $\ALG$ with $\WALG$: (i) we can no longer find the best policy $\mu_t$ with respect to all the previous weights plus the perturbation in every iteration, and (ii) we no longer have a separation oracle for $P_R$, as the $SO_R$ (Algorithm~\ref{alg:SO}) relies on the MDP solver when $P_R$ is implicitly specified by the expert's policy. It turns out (i) is not hard to deal with, as the FPL algorithm is robust enough to work with only an approximate leader. (ii) is much more subtle. We design a new algorithm and use it as a proxy for the polytope $P_R$. We call this new algorithm a weird separation oracle (following the terminology in~\cite{CaiDW13a}) as the points it may accept do not necessarily form a convex set, even though it does accept all points in $P_R$. It may seem at first not clear at all why such a weird separation oracle can help us. However, we manage to prove that just with this weird separation oracle, we can still compute an approximate minimizing weight vector $w_t$ in $P_R$ in every iteration (Step 5 of Algorithm~\ref{alg:FPL maxmin}). Combining this with our solution for challenge (i), we can still compute an approximately maxmin policy with essentially the same performance as in Algorithm~\ref{alg:FPL maxmin}. 

\begin{theorem}\label{thm: approx FPL maxmin}
{If we replace the exact MDP solver $\ALG$ with an approximate solver $\WALG$ in step 4 of Algorithm ~\ref{alg:FPL maxmin},} then for any $\xi\in (0,1/2)$ and any $c>0$, with probability at least $1-2\xi$, Algorithm~\ref{alg:FPL maxmin} finds a policy $\pi$ after $T$ rounds of iterations such that its expected reward under any weight from $P_R$ is at least $\max_{\mu\in P_F}\min_{w\in P_R} \mu\cdot w-\frac{k^2\left(6+4\sqrt{\ln 1/\xi}\right)}{\sqrt{T}} - 2c $. In every iteration, Algorithm~\ref{alg:FPL maxmin} makes one query to $\WALG$ and a polynomial number of queries to $SO_R$. In particular, for every query to $\WALG$, we first divide the input by $2T$ then feed it to $\WALG$ and ask for a policy that is at most $c/2T$ worse than the optimal one.
\end{theorem}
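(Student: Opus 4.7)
The plan is to mirror the proof of Theorem~\ref{thm:FPL maxmin}, carefully tracking the additional slack that $\WALG$ introduces in step 4 while noting that step 5 (and hence the designer-side analysis) is untouched. Let $W_t := \sum_{i=1}^{t-1} w_i + p_t$ denote the cumulative ``leader'' weight vector used by the agent in iteration $t$. Since $w_i \in [-1,1]^k$ and $p_t \in [0,1/\delta]^k = [0,k\sqrt{T}]^k$, we have $\|W_t\|_1 \leq kT + k^2\sqrt{T}$, which grows with $T$ and would make an additive-FPTAS guarantee useless at a scale of $O(c)$ per round. The rescaling trick fixes this: feeding $\WALG$ the MDP with reward weights $W_t/(2T)$ (whose coordinates are in $[-1/2,\,1/2+k/(2\sqrt{T})]$) and requesting additive accuracy $c/(2T)$ produces in time polynomial in $\langle M\setminus R\rangle,\langle \phi\rangle, k, T/c$ a $\mu_t \in P_F$ with $\mu_t\cdot (W_t/(2T))\geq \max_{\mu\in P_F}\mu\cdot(W_t/(2T))-c/(2T)$; multiplying through by $2T$ shows that in every iteration
\begin{equation*}
\mu_t\cdot W_t \;\geq\; \max_{\mu\in P_F}\mu\cdot W_t \;-\; c,
\end{equation*}
i.e.\ $\mu_t$ is an additive $c$-approximate leader with respect to the unscaled objective.

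Because step 5 is unchanged, the designer-side inequality from the proof of Theorem~\ref{thm:FPL maxmin} holds verbatim: with probability at least $1-\xi$ over $Q=(q_1,\ldots,q_T)$,
\begin{equation*}
-\sum_{t=1}^T \mu_t\cdot w_t + \min_{w\in P_R}\sum_{t=1}^T \mu_t\cdot w \;\geq\; -B,\qquad B := k^2\sqrt{T}\bigl(3+2\sqrt{\ln 1/\xi}\bigr).
\end{equation*}
For the agent, we extend Theorem~\ref{thm:FPL} to an $\eta$-approximate argmax by re-examining its proof. The standard argument decomposes the regret into a ``be-the-leader'' telescoping term, bounded against the offline optimum, and a stability term, bounded by the expected drift between consecutive perturbed leaders. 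Both parts go through with an approximate maximizer in place of the exact one, each contributing an extra additive $\eta$ per round; the high-probability martingale concentration that lifts the expectation bound is unaffected by this substitution. Setting $\eta = c$, we conclude that with probability at least $1-\xi$ over $P=(p_1,\ldots,p_T)$,
\begin{equation*}
\sum_{t=1}^T \mu_t\cdot w_t - \max_{\mu\in P_F}\sum_{t=1}^T \mu\cdot w_t \;\geq\; -B - 2cT,
\end{equation*}
where the factor $2$ accounts for the approximation appearing both in the actual leader $\mu_t$ and in the hypothetical be-the-leader used inside the analysis.

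Union-bounding the two events and chaining inequalities exactly as in the proof of Theorem~\ref{thm:FPL maxmin},
\begin{align*}
\min_{w\in P_R}\sum_{t=1}^T \mu_t\cdot w
&\;\geq\; \sum_{t=1}^T \mu_t\cdot w_t - B\\
&\;\geq\; \max_{\mu\in P_F}\sum_{t=1}^T \mu\cdot w_t - 2B - 2cT\\
&\;\geq\; T\cdot \max_{\mu\in P_F}\min_{w\in P_R} \mu\cdot w - 2B - 2cT.
\end{align*}
Dividing through by $T$ shows that the randomized policy $\frac{1}{T}\sum_t \pi_t$ achieves expected reward under any $w\in P_R$ at least $\max_{\mu\in P_F}\min_{w\in P_R}\mu\cdot w - 2B/T - 2c$, matching the claimed bound. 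The per-iteration cost is one call to $\WALG$ on a bounded-norm MDP at accuracy $c/(2T)$, plus the same $\poly(k,\langle M\setminus R\rangle,\log T)$ queries to the (exact) $SO_R$ used inside the ellipsoid-based step 5 of the original algorithm.

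The main obstacle is the approximate-leader extension of the FPL regret bound: we must verify that the Kalai--Vempala/Hutter--Poland stability argument and the subsequent Hoeffding--Azuma martingale concentration tolerate a $c$-suboptimal $\mu_t$ in each round without any degradation beyond the transparent $O(cT)$ additive term. The scaling-by-$2T$ trick is exactly what enables this: it keeps the reward magnitude bounded so that the additive FPTAS produces a per-round approximation error of $c$ in polynomial time, which after averaging over the $T$ iterations becomes the $2c$ slack in the final bound.
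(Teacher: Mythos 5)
Your proposal is correct and follows essentially the same route as the paper: rescale the cumulative weights by $2T$ so the additive FPTAS at accuracy $c/(2T)$ yields a $c$-approximate leader per round, keep the designer-side bound (Eq.~\eqref{eq:designer regret}) unchanged, weaken the agent-side bound (Eq.~\eqref{eq:agent regret}) from $-B$ to $-B-2cT$, and chain as in Theorem~\ref{thm:FPL maxmin}. The only difference is that where the paper invokes the approximate-FPL regret bound as a citation (Lemma~\ref{lem:approximateFPL}, from~\cite{Ben-TalHKM15}), you sketch its derivation via the be-the-leader telescoping and stability decomposition, which is a legitimate filling-in of the imported lemma rather than a different argument.
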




The proof of Theorem~\ref{thm: approx FPL maxmin} is similar to the proof of Theorem~\ref{thm:FPL maxmin}. We use the bounds provided by Lemma~\ref{lem:approximateFPL} instead of Theorem~\ref{thm:FPL}, and change the RHS in Equation~\eqref{eq:agent regret} from $-B$ to $-k^2\sqrt{T}\left(3+2\sqrt{\ln 1/\xi}\right) -2cT$ accordingly. The rest of the proof remains the same. 

Assume we have a procedure $M_\eta$ for $\eta$-approximating linear programs over the decision set $D$ such that for all $s\in \mathbb{R}^k$, 
\[	s \cdot M_\eta(s) \geq \argmax_{d\in D} s \cdot d - \eta.\]

\begin{lemma}[Follow the Approximate Perturbed Leader]~\cite{Ben-TalHKM15}\label{lem:approximateFPL} Let $d_1,\ldots,d_T$ be a sequence of decision made by an $\eta$-approximating procedure $M_\eta$ such that $d_t = M_\eta(\sum_{i=1}^{t-1} s_i + p_t)$. Then
$$\E\left[\sum_{t=1}^T d_t\cdot s_t-\max_{d\in D} \sum_{t=1}^T d\cdot s_t\right]\geq -\delta\cdot C_1C_2 T-\frac{2C_3}{\delta} - 2\eta T .$$ The definition of constants $C_1$, $C_2$ and $C_3$ are the same as in Theorem~\ref{thm:FPL}.
Moreover, for all $\xi\geq 0$, with probability at least $1-\xi$, the actual accumulative reward under any adaptive adversary satisfies, \begin{align*}
&\sum_{t=1}^T d_t\cdot s_t-\max_{d\in D} \sum_{t=1}^T d\cdot s_t\geq\\
 &~~~~~~~~~~~~~~~~~~ -\delta\cdot C_1C_2 T-\frac{2C_3}{\delta}-2C_2\sqrt{T\ln \frac{1}{\xi}} - 2\eta T .
 \end{align*}
 \label{lemma:approx FPL}
\end{lemma}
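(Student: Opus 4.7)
The plan is to adapt the standard Kalai--Vempala FPL analysis, inserting the $\eta$-suboptimality of $M_\eta$ additively at exactly the two places where ``exact argmax'' is used: a deterministic be-the-approximate-leader step and a probabilistic coupling step. The high-probability version then follows from Hoeffding--Azuma applied to a martingale, exactly as in the exact case of Theorem~\ref{thm:FPL}.

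First I would prove a \emph{be-the-approximate-leader} (BAL) inequality. Fix any $p \in [0, 1/\delta]^n$ and define the hypothetical lookahead decisions $\tilde{d}_t := M_\eta\!\left(p + \sum_{i=1}^{t} s_i\right)$, with $\tilde{d}_0 := M_\eta(p)$. By induction on $T$,
\[
\sum_{t=1}^T \tilde{d}_t \cdot s_t \;\geq\; \tilde{d}_T \cdot \!\left(p + \sum_{t=1}^T s_t\right) \;-\; \tilde{d}_0 \cdot p \;-\; \eta T,
\]
because the inductive swap of $\tilde{d}_{t-1}$ for $\tilde{d}_t$ in a score that $\tilde{d}_{t-1}$ only $\eta$-approximately maximizes costs $\eta$ per step. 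Combining with a second $\eta$-approximation, $\tilde{d}_T \cdot (p+\sum_t s_t) \geq \max_{d\in D} d\cdot \sum_t s_t + \min_{d\in D} d\cdot p - \eta$, and the bounds $|\tilde{d}_0 \cdot p|,\, |\min_d d\cdot p|\leq C_3/\delta$ (using $\|p\|_\infty\leq 1/\delta$ and $\|d\|_1\leq C_3$), yields the pathwise estimate
\[
\sum_{t=1}^T \tilde{d}_t \cdot s_t \;\geq\; \max_{d\in D} \sum_{t=1}^T d \cdot s_t \;-\; \frac{2C_3}{\delta} \;-\; 2\eta T,
\]
where the two sources of $\eta$ (one from the BAL induction, one from the final approximation) are collected into $2\eta T$ for $T\geq 1$.

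Next I would compare the actual FPL play $d_t = M_\eta(\sum_{i<t} s_i + p_t)$ to $\tilde{d}_t$. Since the BAL bound is pathwise, I may re-index $p$ as $p_t$ in round $t$ (each $p_t$ is itself uniform on $[0,1/\delta]^n$). Conditioning on the history $\mathcal{F}_{t-1}$ (under which $s_t$ is determined, because the adaptive adversary commits to $s_t$ before seeing $p_t$), the translation coupling $p' = p_t + s_t$ shows that $\E[\tilde{d}_t\cdot s_t\mid \mathcal{F}_{t-1}]$ and $\E[d_t\cdot s_t\mid\mathcal{F}_{t-1}]$ are expectations of the same bounded function $M_\eta(\cdot)\cdot s_t$ under two product uniform distributions whose total-variation distance is at most $\delta\|s_t\|_1 \leq \delta C_1$. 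Since $|M_\eta(\cdot)\cdot s_t|\leq C_2$, the two conditional expectations differ by at most $\delta C_1 C_2$. Summing $t=1,\dots,T$ and combining with the pathwise BAL bound in expectation yields the expectation statement with slack $-\delta C_1 C_2 T - 2C_3/\delta - 2\eta T$.

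For the high-probability bound against an adaptive adversary, I would set $X_t := d_t \cdot s_t - \E[d_t \cdot s_t \mid \mathcal{F}_{t-1}]$; this is a martingale difference with $|X_t|\leq 2C_2$. Hoeffding--Azuma gives $\sum_t X_t \geq -2C_2\sqrt{T\ln(1/\xi)}$ with probability $1-\xi$, and substituting the per-round conditional-expectation bound derived above produces the claimed tail inequality. The main obstacle is ensuring that both the BAL inequality and the translation-coupling step remain valid \emph{conditionally} on the adversarial history, so that they slot into the martingale argument; this works precisely because the adaptive adversary commits to $s_t$ before $p_t$ is drawn, making the coupling pointwise in the history rather than requiring averaging over it.
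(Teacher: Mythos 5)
The paper does not actually prove this lemma: it is imported verbatim from \cite{Ben-TalHKM15} (just as the exact-oracle version, Theorem~\ref{thm:FPL}, is imported from \cite{KalaiV05,HutterP05}), so there is no in-paper proof to compare against. Judged on its own merits, your reconstruction follows the same architecture as the cited sources, and the genuinely new ingredient relative to exact FPL --- the $\eta$-bookkeeping --- is handled correctly: the be-the-approximate-leader induction is valid (the swap of $\tilde{d}_{t-1}$ for $\tilde{d}_t$ against a score that $\tilde{d}_{t-1}$ only $\eta$-approximately maximizes costs exactly $\eta$ per round), the final comparison to the best fixed $d$ costs one more $\eta$, and collecting $\eta(T+1)\leq 2\eta T$ reproduces the stated $-2\eta T$ slack; the perturbation terms $2C_3/\delta$ and the translation-coupling bound $\delta\|s_t\|_1\leq \delta C_1$ are also right.

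There is, however, one step that is asserted rather than proved, and it is precisely the step that the cited literature treats with care: ``since the BAL bound is pathwise, I may re-index $p$ as $p_t$ in round $t$.'' The BAL telescoping requires a \emph{common} perturbation across rounds --- the inductive swap compares $\tilde{d}_{t-1}$ and $\tilde{d}_t$ against the same vector $p+\sum_{i\leq t-1}s_i$, and with round-dependent $p_t$ the terms $\tilde{d}_t\cdot(p_t-p_{t-1})$ no longer cancel. Against an \emph{oblivious} sequence one can pass from common $p$ to fresh $p_t$ by matching marginals term by term, but against an adaptive adversary the realized $s_t$ depends on $p_1,\ldots,p_{t-1}$, so the per-round expectations in the common-$p$ world and the fresh-$p_t$ world are taken along different sequences and cannot be spliced. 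Your stated justification (the adversary commits to $s_t$ before $p_t$ is drawn) licenses exactly the FPL-versus-lookahead coupling, i.e.\ $|\E[d_t\cdot s_t\mid \mathcal{F}_{t-1}]-\E[\tilde{d}_t\cdot s_t\mid\mathcal{F}_{t-1}]|\leq O(\delta C_1C_2)$, but \emph{not} the lookahead regret bound itself with fresh perturbations; closing that gap is the content of the self-obliviousness/adaptive-adversary argument in \cite{HutterP05}, which must be invoked or reproduced (and the same gap propagates into your high-probability claim, since the martingale step needs a bound on $\sum_t\E[d_t\cdot s_t\mid\mathcal{F}_{t-1}]$ relative to the random comparator $\max_{d\in D}\sum_t d\cdot s_t$). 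Two minor constant-level remarks: with $|d\cdot s_t|\leq C_2$ two-sided, the coupling strictly yields $2\delta C_1C_2$ per round rather than $\delta C_1C_2$ (the factor $1$ requires a one-sided range, as in the original nonnegative-cost setting), and the Hoeffding--Azuma application with $|X_t|\leq 2C_2$ gives $2C_2\sqrt{2T\ln(1/\xi)}$ rather than $2C_2\sqrt{T\ln(1/\xi)}$; both discrepancies are inherited from the loosely stated constants in Theorem~\ref{thm:FPL} and are immaterial to how the lemma is used, but worth noting if you intend the proof to be self-contained.
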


An astute reader may have noticed that in the analysis above, we used the same separation oracle $SO_R$ as in section~\ref{sec:reward SO}. However, in the case when the separation oracle for the reward polytope is implicitly specified by an expert policy, $SO_R$ queries the MDP solver in step 1 of algorithm~\ref{alg:SO}. If we do not have an exact MDP solver $\ALG$, it is not clear how we can define a separation oracle for polytope $P_R$. We use Algorithm~\ref{alg:WSOR} as an proxy to polytope $P_R$.


\begin{algorithm}[ht]
\begin{algorithmic}[1]
\STATE Let $\mu_{w'}^{(\eta)}:=\WALG(w',\eta) $. It is the feature vector of the policy computed by the approximate MDP solver $\WALG$ with accuracy $\eta$, and $\mu_{w'}^{(\eta)}\cdot w' \geq \max_{\mu\in P_R} \mu\cdot w' - \eta$. 
\IF {$\mu_{w'}^{(\eta)}\cdot w'> \mu_E \cdot w' + \epsilon$} \STATE output ``NO'' , and 
	 $ \left(\mu_E- \mu_{w'}^{(\eta)} \right)\cdot w + \epsilon\geq 0$ as the separating hyperplane, 
	 since for all $w \in P_R, \mu_E\cdot w \geq \mu_{w'} ^{(\eta)}\cdot w -\epsilon$.
\ELSE \STATE output ``YES''.
\ENDIF
\end{algorithmic}
\caption{{\sf Weird Separation Oracle $WSO^\eta_R$ for the reward polytope $P_R$}}
\label{alg:WSOR}
\end{algorithm}

We call $WSO^\eta_R$ a weird separation oracle for for the reward polytope for $P_R$, because the set of $w'$ that it will accept is not necessarily convex. For example, the following may happen. First, we query two points $w_1$ and $w_2$ that are close to each other. Both are accepted by $WSO_R^\eta$, and it happens to be the case that $\WALG(w_1,\eta)$ and $\ALG(w_2,\eta)$ are both $\eta$ away from the optimal solutions. Now we query $w_3 = (w_1+w_2)/2$, and run $WSO^\eta_R$. Luckily (or unfortunately) $\WALG(w_3,\eta)$ is close to optimal, and $w_3$ is rejected.  

\begin{lemma}\label{lem:WSOR}
	For any linear optimization problem, we can construct a polynomial time algorithm based on the ellipsoid-method that queries $WSO_R^{\eta}$, such that it finds a solution that is at least as good as the best solution in polytope ${P = \{w | w \cdot \mu_E \geq w\cdot  \WALG(w',\eta)-\epsilon, \forall w' , \|w'\|_1 } \leq L\}$, although our solution does not necessarily lie in $P$. 
	\end{lemma}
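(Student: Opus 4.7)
The plan is to run the ellipsoid method with the sliding-objective technique, using $WSO_R^\eta$ in place of a true separation oracle. The crucial observation is that although the set of points accepted by $WSO_R^\eta$ need not be convex, the polytope $P$ itself is convex, and every ``No'' response of $WSO_R^\eta$ produces a halfspace that is genuinely valid for $P$: by the definition of $P$, the constraint $w\cdot (\mu_E - \WALG(w',\eta)) + \epsilon \geq 0$ is satisfied by every $w \in P$ for every $w'$ with $\|w'\|_1\leq L$, including the particular $w'$ queried during the current iteration. Hence a ``No'' cut never discards any part of $P$.

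Concretely, I would initialize an ellipsoid $E_0$ covering the search region $\{w:\|w\|_1\leq L\}$ together with an incumbent pair $(\hat w,\hat v)$ with $\hat v=-\infty$. At each iteration $t$, let $x_t$ be the center of $E_t$ and call $WSO_R^\eta(x_t)$. If the response is ``Yes'', update $(\hat w,\hat v)\leftarrow (x_t, c\cdot x_t)$ and cut $E_t$ with the sliding-objective halfspace $c\cdot w \geq c\cdot x_t$. If the response is ``No'', cut $E_t$ with the returned halfspace. Shrink the ellipsoid in the usual way, and iterate until the volume drops below the standard ellipsoid termination threshold; output $\hat w$.

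Correctness hinges on the loop invariant that at every step the current ellipsoid $E_t$ contains the super-level set $S_t := \{w\in P : c\cdot w > \hat v\}$. This invariant holds initially, is preserved by ``No'' cuts (which only remove points violating a constraint already valid on $P$), and is preserved by sliding cuts (which remove only points of objective at most $\hat v$, so no element of $S_t$). When the algorithm terminates, $E_t$ is too small to contain any candidate feasible point, forcing $S_t=\emptyset$, i.e.\ $\max_{w\in P} c\cdot w \leq \hat v$. Note that $\hat w$ itself may fall outside $P$ because a ``Yes'' response of $WSO_R^\eta$ does not certify membership — this is exactly the caveat in the lemma's statement.

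The main obstacle I anticipate is verifying that the standard polynomial-time guarantees of the ellipsoid method actually transfer to this nonstandard oracle. Two things need to be checked: (i) the halfspaces returned by $WSO_R^\eta$ have polynomially bounded bit complexity, which follows because they are of the form $(\mu_E - \WALG(x_t,\eta))\cdot w + \epsilon \geq 0$ and $\WALG(x_t,\eta)$ has polynomial bit complexity whenever $x_t$ does; and (ii) the centers $x_t$ stay inside the region $\|w\|_1\leq L$ on which $WSO_R^\eta$ is defined, which is enforced by bounding $E_0$ appropriately and adding the trivial box constraints $\|w\|_\infty\leq 1$ (handled by direct cuts rather than through $WSO_R^\eta$). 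With these in place, the standard complexity analysis of the ellipsoid method yields a polynomial-time procedure that uses only polynomially many queries to $WSO_R^\eta$.
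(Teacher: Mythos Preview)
Your argument is correct and rests on exactly the same key observation as the paper's proof: every ``No'' response of $WSO_R^\eta$ returns a halfspace that is valid for the convex polytope $P$, so the shrinking ellipsoid always contains the relevant portion of $P$, and termination by volume forces the desired conclusion. The only cosmetic difference is that the paper reduces optimization to a sequence of feasibility problems via binary search on the objective value $\alpha$ (checking whether $P\cap\{c\cdot w\geq\alpha\}$ is nonempty for each $\alpha$), whereas you run a single ellipsoid pass with sliding-objective cuts; both are standard and the substance of the argument is identical.
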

\begin{prevproof}{Lemma}{lem:WSOR}
We only sketch the proof here. Solving a linear optimization can be converted into solving a sequence of feasibility problems by doing  binary search on the objective value. We show that for any objective value $\alpha$, as long as there is a solution $x\in P$ whose objective value $c\cdot x\geq \alpha$, our algorithm also finds a solution $x'$ such that $c\cdot x'\geq \alpha$. First, imagine we have a separation oracle for $P$, and the ellipsoid method needs to run $N$ iterations to determine whether there is a solution in $P$ whose objective value is at least $\alpha$. The correctness of ellipsoid method guarantees that if it hasn't found any solution after $N$ iterations, then the intersection of the  halfspace $c\cdot x\geq \alpha$ and $P$ is empty. The reason is that if the intersection is not empty it must have volume at least $r$, and the ellipsoid method maintains an ellipsoid that contains the intersection of the halfspace $c\cdot x\geq \alpha$ and $P$ and shrinks the volume of the ellipsoid in every iteration. After $N$ iterations the ellipsoid already has volume less than $r$.

Our algorithm also runs the ellipsoid method for $N$ iterations. In each iteration, we first check the constraint $c\cdot x\geq \alpha$, if not satisfied, we output this constraint as the separating hyperplane. If it is satisfied, instead of querying the real separation oracle for $P$, we query $WSO^\eta_R$. If the answer is ``YES", we have found a solution $x$ such that $c\cdot x\geq \alpha$. If the answer is ``NO", clearly this query point is not in $P$, and the outputted separating hyperplane contains the intersection of the halfspace $c\cdot x\geq \alpha$ and $P$. Therefore, whenever our algorithm accepts a point, it must have objective value higher than $\alpha$. Otherwise, the shrinking ellipsoid still contains the intersection of the halfspace $c\cdot x\geq \alpha$ and $P$. If our algorithm terminates after $N$ iterations without accepting point, we know that the intersection between the  halfspace $c\cdot x\geq \alpha$ and $P$ is empty as the volume of the ellipsoid after $N$ iterations is already too small.

\end{prevproof}

Consider the following three polytopes:

\noindent (i) $P_R := \left\{w\ |\ w\cdot\mu_E \geq w\cdot \mu -\epsilon,\ \forall \mu \in P_F\right \}$

\vspace{.1in}
\noindent (ii) $P = \{{w|w \cdot \mu_E \geq w\cdot \WALG(w',\eta) -\epsilon, \forall w' , \|w'\|_1 \leq L}\}$

\vspace{.1in}
\noindent (iii) $P_R^{(\epsilon + \eta)}:= \{w\ |\ w\cdot\mu_E \geq w\cdot \mu -\epsilon-\eta,  \forall \mu \in P_F\}$. 

\begin{fact}
	$P_R\subseteq P$.
	\label{fact:PRinP}
\end{fact}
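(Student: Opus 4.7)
The plan is to show containment directly from the definitions by observing that the constraints defining $P$ are a sub-collection of those defining $P_R$. Specifically, I will argue that every vector of the form $\WALG(w',\eta)$ that appears in the defining constraints of $P$ is itself an element of $P_F$, since $\WALG$ returns the feature vector of an actual (approximately optimal) policy, and any policy's feature vector $\Psi(\pi)$ lies in $P_F$ by the definition of $P_F$ in Section~\ref{sec:prelim}.

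First I would fix an arbitrary $w \in P_R$; by definition this means $w\cdot \mu_E \geq w\cdot \mu - \epsilon$ for every $\mu \in P_F$. Next, for any $w'$ with $\|w'\|_1 \leq L$, let $\mu' := \WALG(w',\eta)$. Since $\mu'$ is the feature representation of some policy output by $\WALG$, we have $\mu' \in P_F$, so specializing the $P_R$-constraint to $\mu = \mu'$ yields $w\cdot \mu_E \geq w\cdot \WALG(w',\eta) - \epsilon$. As this holds for every such $w'$, we conclude $w \in P$, which gives $P_R \subseteq P$.

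There is essentially no real obstacle here: the fact is almost definitional, and the only subtlety worth flagging explicitly is that $\WALG$'s output is a genuine policy's feature vector, hence lies in $P_F$ regardless of the approximation error $\eta$. The approximation guarantee on $\WALG$ does not enter the argument at all, because $P_R$'s defining constraints quantify over \emph{all} points of $P_F$, not just the optimal ones. This observation is precisely what makes the inclusion trivial, and it also explains why the reverse inclusion need not hold and is instead captured via the relaxed polytope $P_R^{(\epsilon+\eta)}$ considered separately.
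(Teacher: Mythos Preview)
Your argument is correct and is exactly the intended one: the paper states Fact~\ref{fact:PRinP} without proof, and the reasoning you give---that $\WALG(w',\eta)\in P_F$ because it is $\Psi(\pi)$ for an actual policy $\pi$, hence the constraints defining $P$ are a subset of those defining $P_R$---is the natural justification the authors left implicit.
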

\begin{fact}
	$WSO_R^\eta$ only accepts points that are in $P_R^{(\epsilon + \eta)}$.
	\label{fact:PRee}
\end{fact}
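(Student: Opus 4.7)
The plan is to unwind the acceptance condition of $WSO_R^\eta$ directly and combine it with the additive approximation guarantee of $\WALG$ to recover, constraint by constraint, the inequalities that define $P_R^{(\epsilon+\eta)}$.

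First I would observe that, by inspection of Algorithm~\ref{alg:WSOR}, $WSO_R^\eta$ outputs ``\textsc{Yes}'' on input $w'$ exactly when the \textbf{if}-branch test fails, namely $\mu_{w'}^{(\eta)} \cdot w' \leq \mu_E \cdot w' + \epsilon$, where $\mu_{w'}^{(\eta)} = \WALG(w',\eta)$. The defining approximation guarantee of $\WALG$ then gives $\mu_{w'}^{(\eta)} \cdot w' \geq \max_{\mu \in P_F} \mu \cdot w' - \eta$, i.e.\ $\mu \cdot w' \leq \mu_{w'}^{(\eta)} \cdot w' + \eta$ for every $\mu \in P_F$.

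Chaining these two inequalities yields, for every $\mu \in P_F$, the bound $\mu \cdot w' \leq \mu_E \cdot w' + \epsilon + \eta$, which is exactly the constraint indexed by $\mu$ in the description of $P_R^{(\epsilon+\eta)}$. Ranging $\mu$ over all of $P_F$, the accepted point $w'$ satisfies every defining constraint, hence $w' \in P_R^{(\epsilon+\eta)}$, which is the claim.

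There is essentially no real obstacle: the statement is a one-line consequence of the acceptance rule of $WSO_R^\eta$ plus the $\eta$-suboptimality of $\WALG$. The only mild subtlety is that step~1 of Algorithm~\ref{alg:WSOR} writes the approximation guarantee as $\max_{\mu \in P_R} \mu \cdot w' - \eta$; this should be read as $\max_{\mu \in P_F} \mu \cdot w' - \eta$, since $\WALG$ is an MDP solver whose output is a feature vector in $P_F$. Under this (natural) reading the argument is immediate, and no additional convexity or geometric properties of the (potentially non-convex) acceptance region of $WSO_R^\eta$ are needed.
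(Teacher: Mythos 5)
Your proof is correct and is essentially the paper's own argument, just stated in the direct rather than contrapositive direction: the paper shows that any $w\notin P_R^{(\epsilon+\eta)}$ forces $w\cdot\WALG(w,\eta)>\mu_E\cdot w+\epsilon$ and hence rejection, while you chain the acceptance test with the $\eta$-guarantee of $\WALG$ to verify each defining constraint of $P_R^{(\epsilon+\eta)}$ — the same two ingredients. Your side remark is also right: the $\max_{\mu\in P_R}$ in step~1 of Algorithm~\ref{alg:WSOR} is a typo for $\max_{\mu\in P_F}$, and your reading matches how the paper actually uses the guarantee.
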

\begin{proof}
	Suppose $w\notin P_R^{(\epsilon+\eta)}$, then clearly $w\cdot \WALG(w,\eta)\geq \max_{\mu\in P_F} w\cdot \mu -\epsilon-\eta>w\cdot \mu_E$. Hence, $WSO_R^{\eta}$ will not accept $w$.
\end{proof}
\begin{lemma}
	For all $w$ in $P_R^{(\epsilon+\eta)}$,  $w\cdot\frac{\epsilon}{\epsilon+\eta}$ is in $P_R$.
	\label{lemma:shrinkw}
\end{lemma}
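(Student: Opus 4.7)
The plan is to prove this via a direct scaling argument, essentially verifying that the defining halfspace inequalities of $P_R^{(\epsilon+\eta)}$ become the defining halfspace inequalities of $P_R$ after scaling by the positive factor $\frac{\epsilon}{\epsilon+\eta}$. I expect this to be essentially immediate, with no substantial obstacle beyond bookkeeping the inequality direction.

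First I would unpack the definitions. Fix an arbitrary $w \in P_R^{(\epsilon+\eta)}$. By definition this means
\[
w \cdot \mu_E \;\geq\; w \cdot \mu - (\epsilon + \eta) \qquad \text{for every } \mu \in P_F.
\]
Let $w' := \frac{\epsilon}{\epsilon+\eta}\, w$. Since $\epsilon, \eta > 0$, the scalar $\frac{\epsilon}{\epsilon+\eta}$ is strictly positive (and less than $1$), so multiplying the above inequality through by this factor preserves its direction and yields
\[
w' \cdot \mu_E \;\geq\; w' \cdot \mu \;-\; \frac{\epsilon}{\epsilon+\eta}(\epsilon+\eta) \;=\; w' \cdot \mu - \epsilon
\]
for every $\mu \in P_F$. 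This is exactly the defining condition for $w' \in P_R$, so $\frac{\epsilon}{\epsilon+\eta} w \in P_R$ as claimed.

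The main (and only) subtlety worth mentioning is ensuring that the scaling factor is positive, which is why the statement implicitly requires $\epsilon, \eta > 0$; otherwise no rescaling would be needed since $P_R^{(\epsilon+\eta)}$ would coincide with $P_R$ or the claim would be vacuous. Beyond that, there is no additional work to do, so the entire proof is a one-line algebraic manipulation after definition unpacking.
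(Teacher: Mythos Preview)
Your proposal is correct and follows exactly the same approach as the paper: multiply both sides of the defining inequality of $P_R^{(\epsilon+\eta)}$ by the positive scalar $\frac{\epsilon}{\epsilon+\eta}$ to obtain the defining inequality of $P_R$ for the scaled vector $w'$. The paper's proof is a terser version of what you wrote.
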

\begin{prevproof}{Lemma}{lemma:shrinkw}
From the definition of $P_R^{(\epsilon+\eta)}$, multiply both side of the inequality with $\frac{\epsilon}{\epsilon+\eta}$, and let $w' = w\cdot\frac{\epsilon}{\epsilon+\eta}$, $w'$ is in $P_R$.
\end{prevproof}

\begin{theorem}
For any $c>0$ and $\xi\in (0,1/2)$, with probability at least $1-2\xi$, Algorithm~\ref{alg:FPL maxmin} finds a policy $\pi$ after $T$ rounds of iterations such that its expected reward under any weight from $P_R$ is at least $\max_{\mu\in P_F}\min_{w\in P_R} \mu\cdot w-\frac{k^2\left(6+4\sqrt{\ln 1/\xi}\right)}{\sqrt{T}} - 4c$. In every iteration, Algorithm~\ref{alg:eps FPL maxmin} makes one query to $\ALG_\eta$ and a polynomial number of queries to Algorithm~\ref{alg:WSOR}. 
	\label{thm:FPL-WSOR}
\end{theorem}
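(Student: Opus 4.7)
The plan is to mirror the proofs of Theorems~\ref{thm:FPL maxmin} and~\ref{thm: approx FPL maxmin}, but now replacing \emph{both} the exact MDP solver $\ALG$ on the agent's side and the exact separation oracle $SO_R$ on the adversary's side by their approximate counterparts $\WALG$ and the ellipsoid routine of Lemma~\ref{lem:WSOR} built on $WSO_R^{\eta}$. Each of these two approximations will contribute an additive $2cT$ slack to the accumulated regret; after dividing by $T$ they account for the $-4c$ term in the conclusion.

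First, for the agent's side, I replace Step~4 of Algorithm~\ref{alg:FPL maxmin} with a call to $\WALG$. As in Theorem~\ref{thm: approx FPL maxmin}, I feed $\WALG$ the perturbed leader $\sum_{i<t} w_i + p_t$ rescaled by $1/(2T)$ and request accuracy $c/(2T)$; by linearity of the MDP value, the resulting $\mu_t$ is then at most $c$-suboptimal with respect to the unscaled input. Applying Lemma~\ref{lem:approximateFPL} with the same constants $C_1=k$, $C_2=k^2$, $C_3=k$ and choice $\delta = 1/(k\sqrt{T})$ used in the proof of Theorem~\ref{thm:FPL maxmin}, conditional on any realization of $Q = q_1,\dots,q_T$ and with probability at least $1-\xi$ over the randomness of $P = p_1,\dots,p_T$,
\[
\sum_{t=1}^T \mu_t\cdot w_t \;\geq\; \max_{\mu\in P_F}\sum_{t=1}^T \mu\cdot w_t \;-\; B \;-\; 2cT,
\]
where $B = k^2\sqrt{T}\,(3 + 2\sqrt{\ln 1/\xi})$.

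Second, for the adversary's side, I replace the exact minimization over $P_R$ in Step~5 (which previously used $SO_R$, itself built on $\ALG$) with the ellipsoid-based procedure of Lemma~\ref{lem:WSOR} driven by $WSO_R^{\eta}$ (Algorithm~\ref{alg:WSOR}), choosing $\eta$ small enough that the per-iteration slack is at most $c$. Because $P_R\subseteq P$ by Fact~\ref{fact:PRinP}, Lemma~\ref{lem:WSOR} returns a vector $w_t$ whose objective value is no worse than that of any weight in $P_R$. Crucially, $w_t$ need not lie inside $P_R$, but this is harmless: the FPL regret bound compares the algorithm's cumulative loss against the best fixed comparator in the target set $P_R$, so the $w_t$ behave as (at least) exact best-responses relative to this comparator class, up to an additive $c$ slack coming from $\WALG$. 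A second application of Lemma~\ref{lem:approximateFPL} then yields, with probability at least $1-\xi$ over $Q$,
\[
\min_{w\in P_R}\sum_{t=1}^T \mu_t\cdot w \;\geq\; \sum_{t=1}^T \mu_t\cdot w_t \;-\; B \;-\; 2cT.
\]

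Finally, I chain the two inequalities via a union bound exactly as in the proof of Theorem~\ref{thm:FPL maxmin} to obtain
\[
\min_{w\in P_R}\sum_{t=1}^T \mu_t\cdot w \;\geq\; T\cdot \max_{\mu\in P_F}\min_{w\in P_R}\mu\cdot w \;-\; 2B \;-\; 4cT,
\]
and dividing by $T$ delivers the stated bound $\max_{\mu\in P_F}\min_{w\in P_R}\mu\cdot w - \frac{k^2(6+4\sqrt{\ln 1/\xi})}{\sqrt{T}} - 4c$. I expect the main obstacle to be the second step: since the set of points accepted by $WSO_R^{\eta}$ is not convex (as the commentary following Algorithm~\ref{alg:WSOR} emphasizes), the standard volume-shrinking argument for the ellipsoid method does not apply directly to $P_R$. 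Lemma~\ref{lem:WSOR} is precisely the tool that sidesteps this issue, and what requires care is verifying that its output plays the role of a valid FPL decision relative to the comparator class $P_R$ (using $P_R\subseteq P$ from Fact~\ref{fact:PRinP}), so that Lemma~\ref{lem:approximateFPL}'s regret bound can be invoked with $P_R$ as the comparator set without ever demanding $w_t\in P_R$.
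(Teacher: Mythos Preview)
Your argument has a genuine gap in the adversary-side analysis. You assert that the output $w_t$ of the $WSO_R^\eta$-based ellipsoid routine ``need not lie inside $P_R$, but this is harmless.'' This is precisely the step that fails. Two places in the chain require the $w_t$ (or at least their average $\bar w=\frac{1}{T}\sum_t w_t$) to lie in $P_R$:
\begin{itemize}
\item The final inequality $\max_{\mu\in P_F}\sum_t \mu\cdot w_t = T\max_{\mu\in P_F}\mu\cdot\bar w \geq T\max_{\mu\in P_F}\min_{w\in P_R}\mu\cdot w$, copied from the proof of Theorem~\ref{thm:FPL maxmin}, holds only because $\bar w$ is a feasible adversary strategy in $P_R$. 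If $w_t\notin P_R$ then $\bar w$ may lie in the strictly larger set $P_R^{(\epsilon+\eta)}$, and $\min_{w\in P_R^{(\epsilon+\eta)}}\mu\cdot w\leq \min_{w\in P_R}\mu\cdot w$ is the wrong direction.
\item The FPL regret bound itself (Lemma~\ref{lem:approximateFPL}) is proved via the be-the-leader induction, which at step $t$ applies the inductive hypothesis with comparator $d^\star=d_t$; this requires $d_t$ to belong to the decision set. Having $w_t$ super-optimal on the perturbed cumulative objective does not let you compare $w_t\cdot\mu_t$ to $\tilde w_t\cdot\mu_t$ for the true in-$P_R$ leader $\tilde w_t$.
\end{itemize}

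The paper closes this gap with an ingredient you omitted entirely: the \emph{shrinking step}. Algorithm~\ref{alg:eps FPL maxmin} first obtains $w_t'$ from the ellipsoid routine of Lemma~\ref{lem:WSOR} (so $w_t'\cdot(\sum_{i<t}\mu_i+q_t)\leq \min_{w\in P_R}w\cdot(\sum_{i<t}\mu_i+q_t)$ by Fact~\ref{fact:PRinP}), observes that $w_t'\in P_R^{(\epsilon+\eta_2)}$ by Fact~\ref{fact:PRee}, and then sets $w_t=\frac{\epsilon}{\epsilon+\eta_2}w_t'$, which lies in $P_R$ by Lemma~\ref{lemma:shrinkw}. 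The rescaling costs at most $\frac{\eta_2}{\epsilon+\eta_2}\cdot 2k^2T=c$ in objective value, so $w_t\in P_R$ is a $c$-approximate perturbed leader and Lemma~\ref{lem:approximateFPL} now legitimately applies with decision set $P_R$. With this correction, both the FPL regret inequality and the final chaining step go through, and the two $2cT$ slacks (agent side from $\WALG$, adversary side from the shrink) combine to give the $-4c$ in the statement. Your proposal needs to incorporate Fact~\ref{fact:PRee} and Lemma~\ref{lemma:shrinkw} in exactly this way.
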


Now, we are ready to describe the algorithm using only access to $WSO_R^{\eta}$.

\begin{algorithm}[ht]
\begin{algorithmic}[1]
\INPUT $T$: the number of iterations
\STATE Set $\delta:=\frac{1}{k\sqrt{T}}$, where $||w||_1\leq L$ for all $w\in P_R$. Set $\eta_1:=\frac{c}{2T}$ and $\eta_2:=\frac{c\epsilon}{2k^2T-c}$.
\STATE Arbitrarily pick some policy $\pi_1$ and compute $\mu_1 \in P_F$. Arbitrarily pick some reward weights $w_1$, and set $t = 1$.
\WHILE{$t\leq T$}{
\STATE Let policy $\pi_t$ and $\mu_t = \Psi(\pi_t)$ be the output of $\WALG\left( \left(\sum_{i=1}^{t-1} w_i+p_t\right )/T, \eta_1 \right)$ 
, where $p_t$ is drawn uniformly from $[0,1/\delta]^k$.
\STATE Use our algorithm in Lemma~\ref{lem:WSOR} with $WSO_R^{\eta_2}$ to solve $\mathop{\min} w^T (\sum_{i=1}^{t-1}\mu_t + q_t)$, where $q_t$ is drawn uniformly from $[0,1/\delta]^k$. Let $w'_t$ be the solution and set $w_t$ to be $w'_t\cdot\frac{\epsilon}{\epsilon+\eta_2}$.
\STATE $t:= t+1$.}
\ENDWHILE
\STATE Output the randomized policy $\frac{1}{T}\cdot \sum_{t=1}^T \pi_t$.
\end{algorithmic}
\caption{{\sf Finding the Maxmin Policy using Follow-the-Perturbed-Leader (FPL)}}
\label{alg:eps FPL maxmin}
\end{algorithm}

\begin{prevproof}{Theorem}{thm:FPL-WSOR}
At each time step $t$, using $WSO_R$,  Algorithm~\ref{alg:eps FPL maxmin} step 5 outputs a $w_t$. By Lemma~\ref{lem:WSOR} and Fact~\ref{fact:PRinP}, 
$$w'_t\cdot\left(\sum_{i=1}^{t-1}\mu_i + q_t\right) \leq \min_{w \in P_R}w\cdot\left(\sum_{i=1}^{t-1}\mu_i +q_t\right).$$
By Lemma~\ref{lemma:shrinkw} and Fact~\ref{fact:PRee}, 
\begin{align*}
	&w_t \cdot\left(\sum_{i=1}^{t-1}\mu_i + q_t\right) = \frac{\epsilon}{\epsilon+\eta_2}\cdot w_t' \cdot\left(\sum_{i=1}^{t-1}\mu_i + q_t\right)\\
	 \leq & \min_{w \in P_R }w\cdot\left(\sum_{i=1}^{t-1}\mu_i +q_t\right)
+ \frac{2k^2T}{\epsilon+\eta_2},
\end{align*}  where we used the fact that $$ -w_t\left(\sum_{i=1}^{t-1}\mu_i + q_t\right) \leq  2k^2T.$$
Since $c = \frac{2\eta_2 k^2T}{\epsilon+\eta_2}$, we can use Lemma~\ref{lemma:approx FPL} and replace the RHS in Equation~\eqref{eq:designer regret} that was used in the proof of Theorem~\ref{thm:FPL maxmin} to  $-k^2\sqrt{T}\left(3+2\sqrt{\ln 1/\xi}\right) - 2cT$. The analysis for $\mu_t$ remains the same as in Thoerem~\ref{thm: approx FPL maxmin}.
\end{prevproof}

\section{Experiment Details}\label{sec:suppexp}
In every iteration of Algorithm~\ref{alg:FPL maxmin} and Algorithm~\ref{alg:eps FPL maxmin}, step 5 computes a minimizing weight in $P_R$. Instead of using the ellipsoid method to solve the LP, we use the analytic center cutting-plane method (see~\cite{BoydV07} for a brief overview) throughout our experiments. The method combines good practical performance with reasonable simplicity. 

\subsection{Gridworld}
The domain contains five types of terrain. Four terrain types are used in the demonstration gridworld where we construct the expert policy. We select the rewards for these four terrain types uniformly from $[-0.5,0]$, and the target has a reward of 10. The reward of each terrain type is deterministic. The demonstration MDP is uniformly composed of four terrain types, 25\% each type. The fifth terrain type (red colored as in Figure~\ref{fig:gw}) is not present in the demonstration gridworld. The agent is trained in a "real-world" MDP that is composed uniformly of all five terrain types, 20\% each. We select maps that are feasible, such that for all rewards in the consistent reward polytope, value iteration has a solution for the agent to reach the goal. We use feature vectors that indicate the terrain type of each state, choose a discount factor of 0.95, and use value iteration throughout the experiment. The consistency between the expert policy and the reward function is defined with $\epsilon = 0.5$. 

\paragraph{Deterministic transition model} In an MDP with deterministic transition model, the agent moves in exactly the direction chosen by the agent. We run FPL for $5000$ iterations and use the average of policies output by the last $2500$ iterations as the maxmin policy. Figure~\ref{fig:dtgw_occu} shows that our maxmin policy is much safer than a baseline. The baseline policy is computed in an MDP whose reward weights are the same as the demonstration MDP for the first four terrain types and the fifth terrain weight is chosen uniformly at random from [-1,0]. The expert policy for the displayed results is constructed by computing the optimal policy in an demonstration MDP with rewards for the first four terrain type set as $[-0.5, -0.2, -0.4, -0.1]$. The results are accumulated from 100 individual runs using the same expert policy. Examples of the baseline trajectories are shown in Figure~\ref{fig:gw-1}. 

\begin{figure}[ht]
\begin{center}
	\includegraphics[width=0.2\textwidth]{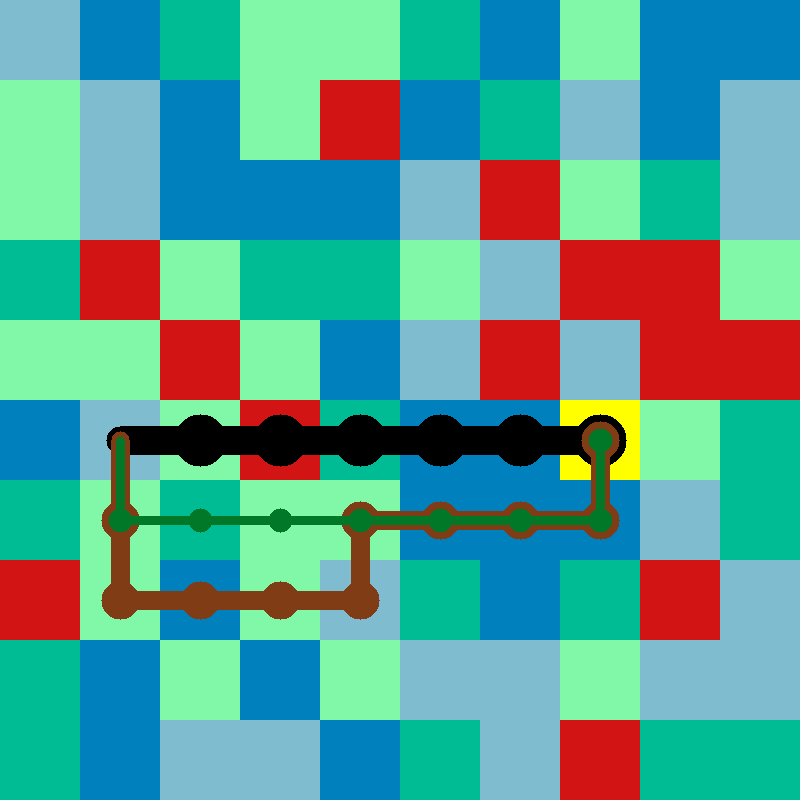}\hfill
\end{center}
\caption{Trajectories chosen by policies generated using weights randomly assigned to the red-colored unknown feature. Although this feature may have negative side effects, the random agent may still go through it. }
\label{fig:gw-1}
\end{figure}

\paragraph{Stochastic transition model} At each state, there is 10\% chance that the agent will go in a random direction regardless of the action chosen by the agent. The agent will receive rewards based on the state it actually lands in. We show in Figure~\ref{fig:gw-stoc-1} that to mitigate the higher risk of traversing the unknown terrain type, our maxmin policy appears to be more conservative than the deterministic case. Although Figure~\ref{fig:stocgw_occu} shows that it cannot absolutely avoid the unknown terrain type due of the stochastic nature of the model, the percentage is much lower than the baseline. The baseline was computed with the same reward weights as in the deterministic case.

 \begin{figure}
\begin{center}
	\includegraphics[width=0.2\textwidth]{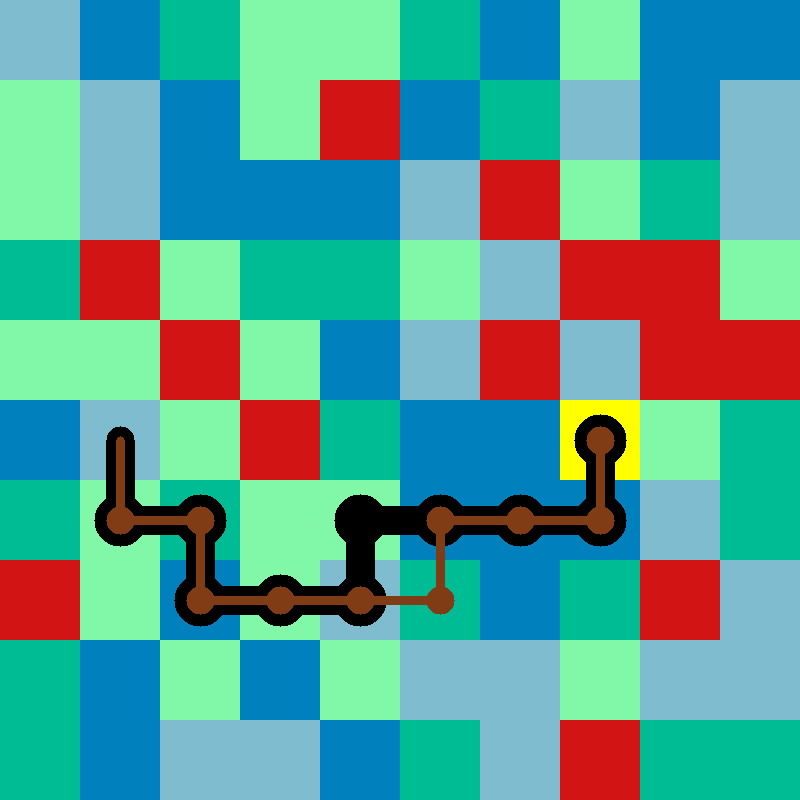}\hfill
\end{center}
\caption{At each state, there is $10\%$ chance that the agent will go in a random direction irrespective of the action chosen by the agent,  our maxmin method is still valid. Comparing to Figure~\ref{fig:gw} ({\bf right}), the maxmin policy also avoids going to the peripheral of the red-colored unknown feature.  }
\label{fig:gw-stoc-1}
\end{figure}

 \begin{figure}\centering
	\includegraphics[width = 0.5 \textwidth]{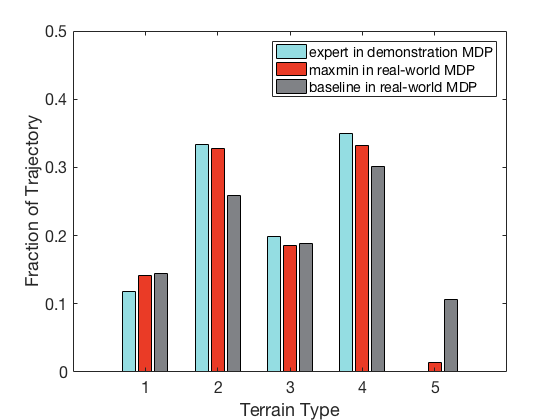}
	\caption{In the gridworld with stochastic transition model, our maxmin policy has a small chance of traversing the unknown terrain type disregard of being more conservative than the maxmin policy in the deterministic case. The percentage is much lower than the baseline.  }
	\label{fig:stocgw_occu}
\end{figure}

\paragraph{Computation Performance}
In our grid world experiment, the worst case running time of ALG is $O(n^2)$, but experiments show a more benign runtime of $O(n^{1.5})$. For a $50\times50$ grid world with 25 features, Algorithm~\ref{alg:FPL maxmin} appears to converge after 325 iterations of FPL with total runtime of 3324 seconds (average of 20 trials, ordinary desktop computer). 
Instead of using the ellipsoid method, we used analytic center cutting-plane method, and the running time appears to scale in the order of $O(k^2)$.

 \subsection{CartPole}
 We modify the classic CartPole task in the OpenAI Gym environment by adding features that may incur additional rewards. This is represented by the question blocks in Figure~\ref{fig:cartpole_env}. The two question blocks correspond to feature indicators for the agent's horizontal position in the range of $[-1.2, 0)$ and $[0.6, 1.8)$. We keep the same episode termination criterions for the pole angle and cart position as the original environment. An episode is considered ending without failing if the pole angel and cart position meet the criterion and the episode length is greater than 500. The agent receives a reward of $+1$ for surviving every step. 
 
 We use longer episodes than the original problem to allow more diverse movement, while it also makes the task more challenging. During validation of a policy, we consider the task solved as getting a target average reward over 100 consecutive episodes with less than five failed episodes. The target average reward depends on the reward we assign for passing the question blocks. If each step spent at question block $i$ incurs reward of $r_i$, the target average reward is set to be $450+25\sum r_i$. For example, in $scenario \ A$, only the blue question block exists and it incurs reward of $-2$, our expert policy $Expert \ A$ passes the validation criterion by getting average reward higher than 400 over 100 consecutive episodes with less than 5 failed episodes. Indeed, our $Expert \ A$ policy performs quite well by getting a reward greater than 450 in $scenario \ A$. In $scenario \ B$, only the yellow question block is present and it incurs reward of $+2$. $Expert \ B$ passes the validation criterion with reward greater than 1000. 
 
 The agent is in an MDP with both blue and yellow question blocks whose reward polytopes are implicitly defined by the expert policy. We use Q-learning and apply updates using minibatches of stored samples as the MDP solver. Notice that for this problem, our MDP solver is not necessarily optimal. We computed maxmin policies when provided with different expert policies. The results in Figure~\ref{fig:cp_maxmin} are from testing the maxmin policy for $2000$ episodes.
\end{document}